\DeclareMathOperator*{\argmax}{arg\,max} 
\newcommand{\bp}{\begin{proof} \small }
\newcommand{\ep}{\end{proof} \normalsize}
\newcommand{\epx}{\end{proof} \small}
\newcommand{\bpa}{\begin{proofappx} \footnotesize }
\newcommand{\epa}{\end{proofappx} \small }
\newtheorem{theorem}{Theorem}
\newtheorem{proposition}{Proposition}
\newtheorem{lemma}{Lemma}
\newtheorem*{theorem*}{Theorem}
\newtheorem*{proposition*}{Proposition}
\newtheorem*{corollary*}{Corollary}
\newtheorem*{lemma*}{Lemma}
\newtheorem*{assumption*}{Assumption}
\newtheorem*{definition*}{Definition}
\newtheorem*{claim*}{Claim}
\newcommand{\bm}[1]{\mbox{\boldmath $#1$}}
\newcommand{\be}{\begin{equation}}
\newcommand{\ee}{\end{equation}}
\newcommand{\bs}{\begin{subequations}}
\newcommand{\es}{\end{subequations}}
\newcommand{\bq}{\begin{eqnarray}}
\newcommand{\eq}{\end{eqnarray}}
\newcommand{\bqn}{\begin{eqnarray*}}
\newcommand{\eqn}{\end{eqnarray*}}
\newcommand{\ba}{\left[ \begin{array}}
\newcommand{\ea}{\\ \end{array} \right]}
\newcommand{\ben}{\begin{enumerate}}
\newcommand{\een}{\end{enumerate}}
\def\real{{\mathchoice%
{\hbox{\rm\setbox1=\hbox{I}\copy1\kern-.45\wd1 R}}
{\hbox{\rm\setbox1=\hbox{I}\copy1\kern-.45\wd1 R}}
{\hbox{\scriptsize\rm\setbox1=\hbox{I}\copy1\kern-.45\wd1 R}}
{\hbox{\scriptsize\rm\setbox1=\hbox{I}\copy1\kern-.45\wd1 R}}}}
\def\Zint{{\mathchoice{\setbox1=\hbox{\sf Z}\copy1\kern-.75\wd1\box1}
{\setbox1=\hbox{\sf Z}\copy1\kern-.75\wd1\box1}
{\setbox1=\hbox{\scriptsize\sf Z}\copy1\kern-.75\wd1\box1}
{\setbox1=\hbox{\scriptsize\sf Z}\copy1\kern-.75\wd1\box1}}}
\newcommand{\complex}{ \hbox{\rm C\kern-0.45em\rule[.07em]{.02em}{.58em}%
\kern 0.43em}}
\newcommand{\algmargin}{\the\ALG@thistlm}
\newlength{\whilewidth}
\algnewcommand{\parState}[1]{\State%
	\parbox[t]{\dimexpr\linewidth-\algmargin}{\strut #1\strut}}
\begin{document}
%
\title{Automated Customization of On-Thing Inference for Quality-of-Experience Enhancement}
%
%
%
%

\author{Yang~Bai,~\IEEEmembership{Student Member,~IEEE,}
        Lixing~Chen,~\IEEEmembership{Member,~IEEE,}
        Shaolei~Ren,~\IEEEmembership{Senior Member,~IEEE,}
        and~Jie~Xu,~\IEEEmembership{Senior Member,~IEEE}

\IEEEcompsocitemizethanks{
Y. Bai and J. Xu are with the Department of Electrical and Computer Engineer, University of Miami, FL, 33146, USA. E-mail:y.bai9@umiami.edu, jiexu@miami.edu.

L. Chen is with the Institute of Cyber Science and Technology, Shanghai Jiao Tong University, and Shanghai Key Laboratory of Integrated Administration Technologies for Information Security, Shanghai 200240. E-mail: lxchen@sjtu.edu.cn.
 
S. Ren is with the Electrical and Computer Engineering, University of California, Riverside, CA, 92521, USA. E-mail:sren@ece.ucr.edu.}
}

%
%

\markboth{Journal of \LaTeX\ Class Files,~Vol.~0, No.~0, Decemner~2021}%
{Shell \MakeLowercase{\textit{et al.}}: Bare Demo of IEEEtran.cls for Computer Society Journals}
%

\maketitle



\IEEEtitleabstractindextext{%
\begin{abstract}
	The rapid uptake of intelligent applications is pushing deep learning (DL) capabilities to Internet-of-Things (IoT). Despite the emergence of new tools for embedding deep neural networks (DNNs) into IoT devices, providing satisfactory Quality of Experience (QoE) to users is still challenging due to the heterogeneity in DNN architectures, IoT devices, and user preferences. This paper studies automated customization for DL inference on IoT devices (termed as on-thing inference), and our goal is to enhance user QoE by configuring the on-thing inference with an appropriate DNN for users under different usage scenarios. The core of our method is a DNN selection module that learns user QoE patterns on-the-fly and identifies the best-fit DNN for on-thing inference with the learned knowledge. It leverages a novel online learning algorithm, \emph{NeuralUCB}, that has excellent generalization ability for handling various user QoE patterns. We also embed the knowledge transfer technique in NeuralUCB to expedite the learning process. However, NeuralUCB frequently solicits QoE ratings from users, which incurs non-negligible inconvenience. To address this problem, we design feedback solicitation schemes to reduce the number of QoE solicitations while maintaining the learning efficiency of NeuralUCB. A pragmatic problem, \emph{aggregated QoE}, is further investigated to improve the practicality of our framework. We conduct experiments on both synthetic and real-world data. The results indicate that our method efficiently learns the user QoE pattern with few solicitations and provides drastic QoE enhancement for IoT devices.  
\end{abstract}

\begin{IEEEkeywords}
On-thing DNN inference, model selection, multi-armed bandit, quality of experience.
\end{IEEEkeywords}}

\IEEEdisplaynontitleabstractindextext

%
\IEEEpeerreviewmaketitle

\section{Introduction}
Deep learning (DL) revolutionizes a broad spectrum of domains \cite{zhang2019deep, deng2014deep} --- computer vision, natural language processing, healthcare, autonomous driving to name a few, achieving remarkable performance comparable to or even exceeding human levels. Such DL intelligence is however built on a deep pool of computing resources. A typical deep neural network (DNN) can contain millions of or even more parameters and consequently, the training of DNNs is often carried out in cloud-scale data centers \cite{eshratifar2019jointdnn, li2017multi}. There is no doubt that Cloud has been a blessing platform for generating DL intelligence, but as the Internet-of-Things (IoT) and mobile industry prospers, there is a growing trend to push the DL intelligence toward end-users as close as possible \cite{ran2018deepdecision, freire2019deep,lane2015can}. More precisely, the inference of DNN is brought down to end devices and IoT devices, such as smartphones, wearables, drones, and medical devices. For example, Apple Siri uses DL for speech synthesis on smartphones and smart watches, and Facebook keeps improving the experience of DNN inference in its mobile app for more than 2 billion users \cite{wu2019machine}. Running DNN inferences on IoT devices (hereinafter, referred to as on-thing inference), users receive improved service quality with reduced service latency, and the inference process also becomes less dependent on Cloud or other edge computing platforms \cite{mao2017mobile}. 

The huge market of IoT and mobile industries is continuously driving the advance of DL techniques for IoT devices. New-generation hardware, e.g., Apple neural engine \cite{sima2018apple}, is designed to accelerate DNN processing on chips. Lightweight DL libraries (e.g., Tensorflow Lite \cite{lite2017android} and Core ML \cite{coreML}) are built to support DL applications on IoT devices. Novel DL algorithms (e.g. 
DNN compression and knowledge distillation \cite{xie2019source, zhang2018systematic, wang2019private}) are proposed to compress large-scale DNN models into compact models that are computationally feasible for embedded devices. As on-thing inference is becoming feasible, a natural to-do item is to improve its inference performance. Various mechanisms, e.g., model selection \cite{taylor2018adaptive, park2015big} DNN compression \cite{xie2019source,zhang2018systematic}, and hyperparameter optimization \cite{cai2017neuralpower,rouhani2016delight}) have been studied in the context of on-device/on-thing inference to deliver higher Quality-of-Service (QoS) in terms of accuracy, inference delay, and energy consumption. While QoS is a useful metric for measuring technical performances, it only reflects part of the service quality. Both industry and academia are moving away from these simple metrics and embracing more holistic approaches of monitoring the overall customer experience \cite{chen2014qos}. Unlike QoS that focuses on a specific set of measurable performance metrics, Quality of Experience (QoE) looks at the overall user satisfaction about the provided service, a fuzzier domain where certain performance imperfections go unnoticed but others may render the application essentially useless. For example, a 5\% accuracy reduction can have a negligible impact on user QoE, while 100ms extra delay can cause the expiration of results. The importance of QoE has been widely evidenced in the networking domain \cite{barakovic2013survey}. However, quantifying the benefits of QoE-awareness in on-thing inference is still an under-investigated topic. This paper studies \underline{O}n-thing \underline{I}nference \underline{C}ustomization (OIC) for DL-based applications. Our goal is to customize on-thing inference for individual users in a way that maximizes user perceived experience.

\subsection{Motivations}
OIC is motivated by the heterogeneity in IoT devices, DNN models, usage scenarios, and user preferences, which jointly affect user perceived QoE in a complicated manner.

\subsubsection{IoT Device Heterogeneity}
IoT devices in the market are extremely diverse in their computing capacities: some high-end devices have state-of-the-art CPUs along with dedicated graphic processing units (GPUs) and even purpose-built processing units (e.g., edge TensorFlow processing units \cite{edgetpu}) to speed up DNN inferences, while many others are powered by CPUs of several years old \cite{wu2019machine}. Consequently, there is no standard device model to optimize the DL inference for. The heterogeneity of device computing capacity results in a huge variability in service quality and user experience. Even with fine-tuned DNNs, the inference latency varies by a factor of 10+ across IoT devices \cite{wu2019machine}. 

\subsubsection{DNN Model Heterogeneity} 
There often exist various DNN architectures that can be used by the application developer to address a learning problem. For example, in \emph{image classification}, commonly-used DNN architectures include MobileNet \cite{howard2017mobilenets}, Inception \cite{szegedy2017inception}, NASNet \cite{zoph2018learning}, Yolo \cite{redmon2016you}, etc. The recent studies on DNN model compression \cite{cheng2018model,han2015deep,huang2018data}, e.g., network pruning, weight quantization, low-rank matrix approximation, and knowledge distillation, further provides more available DNN architectures that exhibit different trade-offs in a multi-dimension space of important metrics (e.g., size vs. accuracy vs. latency). Table \ref{table:dnn_hetero} shows performance metrics of several DNN examples provided by Tensorflow Lite \cite{tensorflowmodels}. We see that different DNNs require different computing resource (model size) and provide different inference performances (accuracy and latency). No single DNN can achieve optimality in all dimensions. 
\begin{table}[htb]
	\centering
	\caption{Performance of DNN models}
	\begin{tabular}{ m{2cm} |  m{1.3cm} m{1.3cm}  m{1.3cm}}
		\hline
		\textbf{Model} & \textbf{Size} & \textbf{Accuracy} & \textbf{Latency} \\\hline\hline
		Mobilenet v1 & \textbf{1.5MB} & 61.2\% & \textbf{3.6ms} \\
		Mobilenet v2 & 3.4MB & 70.8\% & 12ms \\
		Inception v2 & 11MB & 73.5\% & 59ms \\
		Inception v3 & 23MB & \textbf{77.5\%} & 148ms\\
		DenseNet & 43.6MB & 64.2\% & 195ms\\
		NasNet M & 21.4MB & 73.9\% & 56ms\\
		\hline
	\end{tabular}
	\label{table:dnn_hetero}
\end{table}

\subsubsection{Usage Scenario Heterogeneity} 
DNNs running on different IoT devices can be exposed to very different usage scenarios --- different locations, illumination conditions, time, etc. These all account to drastically different distributions of user input data, which can result in very different inference accuracies even for the same DNN model. For example, variations of image illumination can create intra-class variability in image classification problems \cite{taylor2018adaptive,windrim2016unsupervised} and degrade the performance of DL algorithms. Besides environmental factors, the IoT device status (e.g., CPU/memory usage and battery level) also affects the inference quality. For example, when the battery level is low, IoT devices may switch to the battery saving mode and decrease CPU frequency, which results in larger inference latency.

\subsubsection{User Preference Heterogeneity.} 
It should be noticed that users often have different sensitivity towards different performance metrics of DNN inferences. For example, some users are more energy-sensitive due to limited battery capacities, whereas others would like to trade energy consumption for lower inference latency and higher inference accuracy. In addition, the sensitivities of a user can also change under different circumstances, e.g., when the battery level becomes low, the user may become more energy-sensitive. 

\begin{figure*}[htb]
	\centering
	\includegraphics[angle =90,width=1\linewidth]{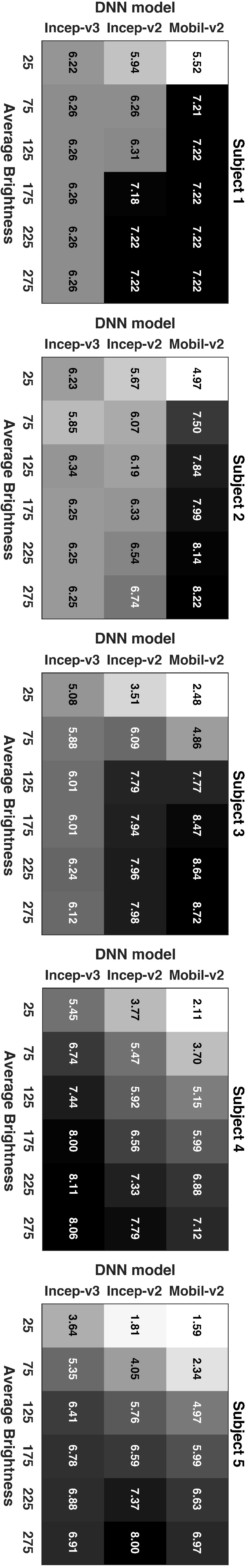}
	\caption{User QoE v.s. DNN models v.s. Environmental changes.}
	\label{fig:context_qoe}
\end{figure*}

A simple trial is carried out to support our claims. We surveyed five human subjects about their user experience when using the image classification application released by Tensorflow Lite \cite{tensorflowmodels}. The application classifies images captured by the device camera in real-time. Two kinds of devices, Motorola X$^4$ and OnePlus One, are tested: subject 1, 2, 3 use OnePlus One and subject 4, 5 use Motorola X$^4$. Different DNNs can be configured in the application to process classification tasks. We use three DNNs in the trial, MobileNet-v2, Inception-v2, and Inception-v3. At the beginning of each round, the application is randomly configured with a DNN, and the environment (e.g., the location of subjects, brightness, etc) also changes randomly. At the end of each round, we solicit the QoE from subjects. Fig. \ref{fig:context_qoe} depicts the average QoE of subjects using different DNNs under different environment (brightness as an example). The results validate some of our claims: 1) The device affects user QoE. We can see that subjects have similar QoE patterns if they use the same device. This because the device computing capacity determines the DNN inference delay which is one of the most key factors that affect user experience. 2) The user preferences are different. Even for subjects using the same device, their QoE patterns exhibit noticeable differences. For example, subject 4 reports a higher QoE using inception-v3 while subject 5 prefers inception-v2. 3) The usage scenario has a significant impact on user QoE. There is a drastic reflection on the QoE variation as we change the brightness of the environment. In particular, we can also observe that users and DNNs have different sensitivity to environmental changes, e.g., the QoE rating of subject 2 on inception-v2 is relatively stable with different brightness values while the QoE rating of subject 3 on inception-v2 varies significantly across brightness.

Motivated by these observations, OIC employs a model selection technique that aims to configure DL inference process with the best-fit DNN on-the-fly based on the computing capacity of IoT devices, usage scenarios, and user preferences. However, several challenges that need to be addressed before OIC can deliver what it is capable of.


\subsection{Challenges and Contributions}
The \textbf{first} challenge is the unknown and diverse user QoE patterns. The crux of OIC is finding an underlying mapping for each DNN that maps the device information, user preference, and usage scenario to the user QoE pattern. If these mappings are available at hand, then the best-fit DNN can be easily identified. Unfortunately, such mappings are unknown a priori, and therefore a learning mechanism is necessary to acquire the user QoE patterns. In particular, OIC will require the learning method to have a good generalization ability as user QoE patterns usually exhibit significant variability due to the heterogeneity in IoT devices and user preferences. The \textbf{second} challenge is the user-perceived inconvenience when learning user QoE patterns online. To customize the on-thing inference for a particular user, we will need QoE data that precisely reflects the user's satisfaction about DNNs under various usage scenarios. Such QoE data can only be collected from users while they are using the application. This is often done by on-screen pop-ups that solicit the user about his/her experience. Such QoE solicitation strategy is commonly adopted, e.g., Skype asks a “How was your call quality?” question when a call is ended. Being users ourselves, we know that frequently receiving experience surveys can be annoying. Also, QoE solicitations may incur additional cost to application developers because incentive mechanisms \cite{carson2007incentive} are often applied to motivate the QoE response. Therefore, a key designing topic of OIC is to keep the solicitation inconvenience minimal during online learning. The \textbf{third} challenge is to guarantee the practicality of OIC. Despite the efficacy, other practical issues should also be considered to successfully deliver functionalities of OIC in practice. For example, the computational complexity of OIC must be low enough to work on IoT devices. OIC is also expected to take effect quickly and hence the learning process should be kept short. In addition, users may provide aggregated QoEs that reflect their experience for multiple DNNs used before rather than an individual DNN. All these practical issues should be taken care of when designing OIC.

In this paper, we design a novel framework to address the above challenges. Fig. \ref{fig:overflow} depicts a block diagram of the proposed OIC method. It consists of two components: a \emph{DNN Selection Module} that learns a QoE predictor for guiding the DNN selection; and a \emph{Feedback Solicitation Scheme} (FSS) that determines when to pop up experience surveys. The contribution of this paper is summarized as follows:    
\begin{figure}[tb]
	\centering
	\includegraphics[width = 0.9\linewidth]{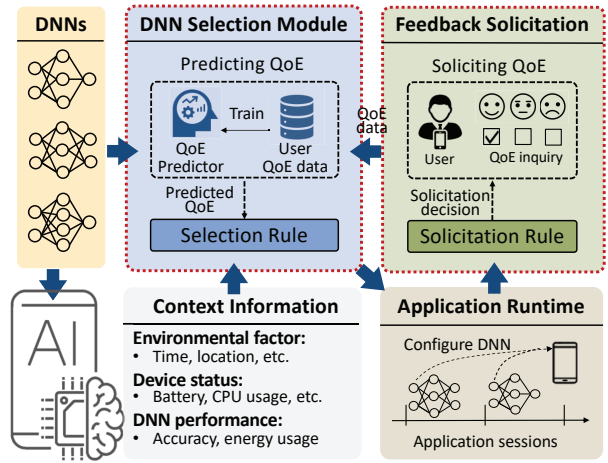}
	\caption{Block diagram of OIC.}
	\label{fig:overflow}
\end{figure}

1) We utilize online learning techniques to build the DNN selection module. A novel contextual multi-armed bandit algorithm called NeuralUCB \cite{zhou2019neural} is employed to learn user QoE based on the side-information of DNNs and usage scenarios. The predicted QoE pattern is then used to identify the best-fit DNN for the application. A salient feature of NeuralUCB is that it does not assume a certain distribution of QoE data and hence, attains generality across different users and devices. We further incorporate knowledge transferring techniques in NeuralUCB to speed up the customization process.

2) Feedback solicitation schemes (FSS) are investigated to reduce the solicitation inconvenience incurred by online learning. FSS aims to reduce the number of QoE solicitations without harming the performance of NeuralUCB. The designed FSS strikes a balance between learning efficiency and solicitation cost. It cuts the number solicitations to $T^{2/3}$ (a 90\% reduction for $T = 1000$, where $T$ is the number of solicitations required without FSS) while keeping asymptotic optimality of NeuralUCB with sublinear performance loss $\mathcal{O}(T^{2/3})$. 

3) A learning strategy is further designed to apply OIC with the aggregated QoE where QoE ratings solicited from users reflect their experience over multiple DNNs used during a time span. The key to addressing the aggregated QoE is a feedback refinement approach that estimates individualized QoEs (i.e., QoEs for individual DNNs) from aggregated QoEs with the assistance of QoE predictor. The feedback refinement approach is a flexible add-on and also handles a mix of aggregated and non-aggregated QoEs.

4) The proposed method is evaluated on both numerical and real-world data. We collect context information and user QoE from human subjects when using a \emph{image classification application} \cite{tensorflowmodels}. The results show that OIC can learn user QoE patterns with low solicitation costs and dramatically improve the user experience. 

The rest of the paper is organized as follows.  Section \ref{sec:related_work} discusses related works. Section \ref{sec:online_learning} designs the DNN selection module. Section \ref{sec:fss} develops feedback solicitation schemes. Section \ref{sec:aggregated} designs a feedback refinement approach for the aggregated QoE. Section \ref{sec:experiment} carries out experiments and evaluations, followed by conclusions in Section \ref{sec:conclusion}.

\section{Related Works}\label{sec:related_work}
\textbf{Deep Learning for IoT Device.}
The past few years have seen a surge in the investigation of DL techniques for IoT and embedded platforms. Promising results are appearing across many domains including hardware, learning algorithms, and tools. Many CPU/GPU vendors are developing new processors for supporting tablets/smartphones to run DL applications. A notable example is Apple Bionic chips \cite{sima2018apple} which include dedicated neural network hardware that Apple calls a "Neural Engine". Various algorithmic techniques also have been proposed. For example, DNN compression methods \cite{xie2019source, han2017ese} prune large-scale DNN models into small DNNs that can be easily implemented on mobile and IoT devices. New tools and libraries, e.g., Tensorflow Lite \cite{lite2017android} and Core ML \cite{coreML} have been proposed to address the specific needs of resource-constrained devices. 

\textbf{DNN Selection.}
DNN selection aims to identify the best-fit DNN from a set of candidate DNNs that are trained with different architectures, parameters, and training data. Unlike DNN selections for general purposes that focus on accuracy and inference delay, DNN selections for IoT devices should take into account other important factors, e.g., computing resource requirement and energy consumption, due to the limited computing capacity and battery on mobile devices. Recent works consider the special needs of mobile devices when designing DNN selection schemes. For example, authors in \cite{park2015big} designed a big/little DNN selection method that considers the energy consumption and accuracy of DNNs. The work \cite{stamoulis2018designing} considered energy consumption, accuracy, and communication constraints posed by mobile devices. All these works select DNNs by optimizing QoS with a specific objective function. By contrast, our DNN selection module is QoE-driven and learns user QoE patterns online. The most related work is probably \cite{lu2019automating}, where the authors also studied a model selection scheme for QoE improvement. Comparing with it, our work has two stark differences. First, the model selection in \cite{lu2019automating} is performed on Cloud, which finds an appropriate model for a user based on its device features. By contrast, our on-thing inference customization method runs model selection on IoT devices, aiming to switch models adaptively according to the user preference and usage scenario. Second, \cite{lu2019automating} used only static device information to guide the model selection while our work encompasses a variety of dynamic states (e.g., task features, device status, and environment changes) and discover complicated QoE pattern for individual users under various states. 

\textbf{Contextual Multi-armed Bandit.}
The contextual bandit problem has been extensively studied in machine learning \cite{langford2007epoch,bubeck2012regret}. The most studied model in the literature is linear contextual bandits \cite{rusmevichientong2010linearly,dani2008stochastic,abe2003reinforcement,auer2002using}, which assume that the expected reward at each round is linear in the context vector. While successful in theory, the linear-reward assumption it makes often fails to hold in practice, which motivates the study of nonlinear or nonparametric contextual bandits \cite{filippi2010parametric,bubeck2011x,valko2013finite,srinivas2009gaussian}. However, they still require fairly restrictive assumptions on the reward function. For instance, \cite{filippi2010parametric} makes a generalized linear model assumption on the reward, \cite{bubeck2011x} requires it to have a Lipschitz continuous property in a proper metric space, and \cite{valko2013finite} assumes the reward function belongs to some Reproducing Kernel Hilbert Space (RKHS). To overcome the above shortcomings, we employ the recently proposed neural contextual bandit, NeuralUCB \cite{zhou2019neural}, as the online learning tool. NeuralUCB utilizes the strong representation power of neural networks to learn nonlinear mappings from context information to the user QoE and follows a UCB strategy \cite{auer2002using} for exploration.

\section{Automated Customization of On-thing Inference}\label{sec:online_learning}
\subsection{System Overview}
\subsubsection{Pre-configuration}
OIC is designed as a plug-in module for IoT applications, and it is deployed on the user device along with the application installation. The application will download multiple DNN models (hereinafter referred to as candidate DNNs), denoted by $\mathcal{M}=\{1,\dots,M\}$, from the developer's cloud. The number of candidate DNNs $M$ should not be too large to avoid excessive storage usage on the user device. The candidate DNNs have different performances in terms of accuracy, latency, energy consumption, etc. The set of candidate DNNs $\mathcal{M}$ defines the action space of DNN selection. In a broader concept, the action space of DNN selection are not necessarily independent DNNs, it can also be different operation points of one DNN model. For example, using the \emph{early exits} technique (e.g., BranchyNet \cite{teerapittayanon2016branchynet}), a DNN can exit earlier (to save time) with a lower inference accuracy, and the set of available exit points (with different accuracy and delay levels) becomes the action space. 

\subsubsection{Operations of On-thing Inference Customization}
The operation timeline of OIC is discretized by \emph{application sessions}, denoted by $\mathcal{T} = \{1,2,\dots, T\}$. An application session begins when a user starts the application and ends when the application exits. The DNN selection module runs in the application loading phase during which it selects a DNN and configures it in the application. Note that the application loading phase originally exists for application initialization, object pooling, server connection, etc. Our method does not prolong this loading phase as we will show later in the experiment that the run-time of our method is less than 2ms. We for now assume that the selected DNN does not change during a session. An extended scene will be considered in Section \ref{sec:aggregated} where the selected DNN changes during a session. The QoE solicitation happens at the end of application sessions. This is done by popping up an on-screen survey that inquires about the user experience during the application session. It is possible that users are not willing to complete the QoE survey questions. In this case, the operator may apply incentive mechanisms \cite{carson2007incentive} to motivate the QoE response. The feedback solicitation schemed designed later (in Section \ref{sec:fss}) will consider the cost of QoE solicitations due to the incentive payment and the inconvenience brought to users. The collected QoE will be stored in a database for learning user QoE patterns later.
\begin{figure}[htb]
	\centering
	\includegraphics[width = 0.95\linewidth]{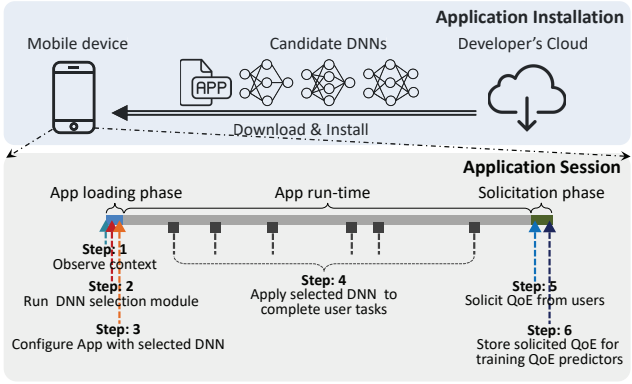}
	\caption{Operations during an application session.}
	\label{fig:operational_timeline}
\end{figure}

\subsubsection{QoE Predictor}
A QoE predictor is built to guide the DNN selection. It establishes a mapping from the context of DNN and usage scenario to user QoE. Examples of context information includes: 1) \emph{environmental context} which can be acquired via sensors equipped with the IoT device, e.g., the device location acquired via GPS signals and the illumination acquired via the ambient light sensor; 2) \emph{device status}, e.g., battery level, CPU usage, and memory usage, which can be inspected by calling Java APIs; 3) \emph{DNN performance}, e.g., size, accuracy, and latency of DNNs. The statistics of DNN performances are available on the TensorFlow Lite website \cite{tensorflowmodels}. Note that these statistics are \emph{nominal} because they are measured on standard device which may be different from the user device, and on a standard dataset which may be different from the user input data. Therefore, it is possible that the performance of a DNN on the user device can be very different from the nominal performance. The context for a DNN $m\in\mathcal{M}$ in application session $t$ is denoted by $x_{t,m}$. We let $\bm{x}_t = \{x_{t,m}\}_{m\in\mathcal{M}}$ collect context of all candidate DNNs. The QoE predictor predicts the user QoE for all candidate DNNs $\hat{\bm{r}}_t = \{\hat{r}_{t,m}\}_{m\in\mathcal{M}}$, where $\hat{r}_{t,m}$ is the predicted QoE for DNN $m$ given context $x_{t,m}$. If QoE predictions are accurate, then the best-fit DNN for session $t$ can be easily determined by \emph{greedy policy} $m_t = \argmax\nolimits_{m\in\mathcal{M}} ~ \hat{r}_{t,m}$. 

Fig. \ref{fig:operational_timeline} depicts the timeline of operations during an application session. Now a fundamental problem for OIC becomes learning an accurate QoE predictor for candidate DNNs. In the sequential, we utilize a multi-armed bandit (MAB) algorithm to learn QoE predictors in an online fashion.

\subsection{Learning QoE Predictors}\label{sec:neural_bandit}
Note that the greedy policy for selecting the best-fit DNN is only plausible when the QoE predictor give accurate enough results, otherwise, the selected DNN may deliver much worse QoE than the expectation. Obtaining a high-quality QoE predictor requires adequate collections of context-QoE data for candidate DNNs. Note that the user QoE for a DNN can be solicited only when the DNN is used during a session, and therefore the purpose of selecting a DNN can be either \emph{exploration}, i.e., to collect user QoE delivered by the selected DNN for better QoE prediction in the future, or \emph{exploitation}, i.e., to select the DNN that is expected to deliver the highest QoE to the user. An important designing goal in MAB problems is balancing the tradeoff between exploration and exploitation.   

Because the user QoE is relevant to the context, we assume the QoE value $r_t$ in session $t$ is randomly sampled from an unknown distribution $\mathcal{R}$ parameterized by the context of selected DNN $x_{t,m_t}$, i.e., $r_t \sim \mathcal{R}(x_{t,m_t})$. Due to the heterogeneity in user preferences, we do not restrict distribution $\mathcal{R}(x_{t,m_t})$ to a certain type. This requires our MAB algorithm to attain good generalization ability for handling any possible QoE distributions the users may have. 

\begin{algorithm}[tb]
	\caption{NeuralUCB for OIC} \label{alg:Neural-UCB}
	\begin{algorithmic}[1]
		\State \textbf{Input}: time horizon $T$, algorithm parameter $\gamma$, the number of nodes in QPN hidden layers $h$.
		\State \textbf{Initialization}: Randomly initialize $\bm{\theta}$, initialize $\bm{Z_0} = \bm{I}$
		\For {application session $t=1,\dots,T$}
		\For {each DNN $m\in\mathcal{M}$}
		\State Observe the context of DNN $m$, $x_{t,m}$ 
		\State Predict QoE for DNN $m$: $\hat{r}_{t,m} \gets \hat{r}(x_{t,m};\bm{\theta})$
		\parState {Computing the gradient of QPN parameter $\bm{\theta}$ at $x_{t,m}$: $\bm{g}_{t,m} \gets \nabla_{\theta}~\hat{r}(x_{t,m};\bm{\theta})$}
		\EndFor
		
		\State Compute $u_{t,m} \gets \hat{r}_{t,m} + \gamma\sqrt{\bm{g}_{t,m}^\top\bm{Z}_{t-1}^{-1}\bm{g}_{t,m}/h}, \forall m$\label{line:selection1} 
		\State Select the DNN $m_t = \argmax_{m\in \mathcal{M}}u_{t,m}$ \label{line:selection2} 
		\parState {Solicit QoE $r_{t,m_t}$ from the user and store the context-QoE data $(x_{t,m_t}, r_{t,m_t})$ in data set $\mathcal{X}$}
		\State Update QPN: $\bm{\theta} \gets \texttt{TrainQPN} \left(\mathcal{X}\right)$ \Comment{Algorithm \ref{alg:TrainNN}}
		\State Compute $\bm{Z}_t \gets \bm{Z}_{t-1}+\bm{g}_{t,m_t}\bm{g}^\top_{t,m_t}/h$;
		\EndFor
	\end{algorithmic}
\end{algorithm}
\begin{algorithm}[tb]
	\caption{Subroutine: \texttt{TrainQPN}$\left(\mathcal{X}\right)$} \label{alg:TrainNN}
	\begin{algorithmic}[1]
		\State \textbf{Input}: learning rate $\eta$, context-QoE dataset $\mathcal{X}$, number of gradient descent update steps $J$;
		\State Define $L(\bm{\theta})=\sum_{(x_{\tau,m_\tau},r_{\tau,m_\tau})\in \mathcal{X}} \big(\hat{r}(x_{\tau,m_\tau};\bm{\theta})-r_{\tau,m_\tau}\big)^2$ 
		\For {$j=1,\dots,J-1$}
		\State $\bm{\theta}_{j+1} = \bm{\theta}_j-\eta\nabla L(\bm{\theta}_j)$
		\EndFor
		\State \Return $\bm{\theta}_J$
	\end{algorithmic}
\end{algorithm}
\subsubsection{NeuralUCB}
Our work employs a novel MAB algorithm called \emph{NeuralUCB} \cite{zhou2019neural}. NeuralUCB constructs neural networks to approximate user QoE patterns. It has the capability of identifying and representing general dependencies in the data without a priori specifying which specific form of distribution to look for. The neural network can act either as a classification model for label-form QoEs (e.g., \emph{"satisfactory"} or \emph{"not satisfactory"}) or a regression model for rating-form QoEs (e.g., $r_t \in [0,1]$). The neural network constructed in NeuralUCB is referred to as \textbf{QoE Predicting Network (QPN)} and works as a QoE predictor in DNN selection module. The parameter vector of QPN is denoted by $\bm{\theta}$. The input to QPN is the context $x_{t,m}$ of a DNN $m\in\mathcal{M}$ and the output of QPN is the predicted QoE delivered by DNN $m$, denoted by $\hat{r}(x_{t,m},\bm{\theta})$. The goal of NeuralUCB is to maximize the cumulative user QoE $\sum^T_{t=1} r_{t,m_t}$ by selecting an appropriate DNN $m_t$ in each session $t$.

The pseudocode of NeuralUCB for OIC is given in Algorithm \ref{alg:Neural-UCB}. In each session, a DNN is selected based on $u_{t,m} : =  \hat{r}(x_{t,m},\bm{\theta})   + \gamma\sqrt{\bm{g}_{t,m}^\top \bm{Z}_{t-1}^{-1} \bm{g}_{t,m}/h}$ (Line \ref{line:selection1} and \ref{line:selection2}). The first term of $u_{t,m}$ is the user QoE predicted by QPN and the second term of $u_{t,m}$ characterizes the uncertainty of the prediction. If the QoE prediction exhibits large uncertainty for DNN $m$, then NeuralUCB has a tendency of selecting DNN $m$ in order to collect its context-QoE data for improving the prediction performance. Otherwise, $u_{t,m}$ is dominated by the predicted QoE value and NeuralUCB selects the DNN that is expected to deliver the highest QoE. The parameter $\gamma$ is used to adjust the importance of exploration and exploitation.

\noindent\textbf{Remark on complexity of NeuralUCB.}
Note that OIC runs on IoT devices, it should be assured that the computational complexity of NeuralUCB is acceptable to resource-constrained IoT devices. The computational cost of NeuralUCB lies mainly in training and running QPN. Later in the experiment, we will show that a simple neural network (3-layer fully connected network with 8, 16, 8 nodes) can achieve good performances. Therefore, the space and time complexity of NeuralUCB is low.

\subsubsection{Speed Up Learning with Knowledge Transferring}
The standard NeuralUCB algorithm learns without any a priori knowledge. It initializes QPNs with random parameters and gradually trains the QPN to approximate the QoE pattern of a particular user. However, certain general knowledge of user QoE patterns is actually available. Although QoE patterns differ across users, the users still share some preferences over certain performance metrics, e.g., all users will prefer higher inference accuracy, lower inference delay, and less energy consumption. In observation of this, we incorporate knowledge transferring \cite{konidaris2006autonomous} (a.k.a. transfer learning \cite{torrey2010transfer}) into NeuralUCB to speed up the customization process. With knowledge transferring, a pre-trained QPN is used in the initialization of NeuralUCB. The pre-trained QPN contains the general knowledge on user QoE patterns and is provided by application developers. For example, application developers can collect QoEs of DNNs from a group of test users on standard IoT devices, and train a QPN that works generally well for all tested users. During online learning, the pre-trained QPN is retrained to approximate the QoE pattern of a particular user via NeuralUCB. Using knowledge transferring dramatically accelerates the customization process.

\subsubsection{Performance guarantee of NeuralUCB}
The performance of NeuralUCB is measured by \emph{regret} which defines the performance loss compared to an Oracle that selects the optimal DNN, $m_t^{*} = \arg\max_{m\in \mathcal{M}} r_{t,m}$, in each session $t$. The regret is formally defined as:
\begin{equation} \label{eq:regret}
R_T = \mathbb{E}\left[\sum\nolimits_{t=1}^T  r_{t,m_t^*} - r_{t,m_t} \right]
\end{equation} 
Next, we state a lemma that provides a performance guarantee of NeuralUCB in terms of regret upper bound. 

\begin{lemma} \label{lemma:neuralucb_regret_bound} (Regret upper bound of NeuralUCB). The regret upper bound of NeuralUCB is $\mathcal{O}(R_T) = \mathcal{O}(\sqrt{T})$, where $T$ is the total number of application sessions.
\end{lemma}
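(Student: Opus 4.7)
The plan is to derive the bound as a direct specialization of the regret analysis of NeuralUCB established in \cite{zhou2019neural}, which shows that, under mild regularity conditions, NeuralUCB enjoys a high-probability regret bound of the form $\tilde{\mathcal{O}}(\tilde{d}\sqrt{T})$, where $\tilde{d}$ is the effective dimension of the neural tangent kernel (NTK) matrix built from the sequence of observed contexts and $\tilde{\mathcal{O}}(\cdot)$ suppresses polylogarithmic factors. My first step would be to restate this underlying bound as a black box and observe that Algorithm \ref{alg:Neural-UCB} is exactly an instantiation of NeuralUCB with $M$ arms corresponding to the candidate DNNs in $\mathcal{M}$, contexts $x_{t,m}$ concatenating environmental, device-status, and DNN-performance features, and rewards given by the solicited QoE $r_{t,m_t}$.

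The second step is to check the hypotheses of that theorem in the OIC setting. Specifically, I would verify: (i) the context vectors $x_{t,m}$ lie in a bounded set, which is arranged by feature normalization since the raw features (battery level, CPU/memory usage, illumination, DNN accuracy/latency, etc.) are intrinsically bounded; (ii) the QoE rewards $r_{t,m}$ are bounded, which holds because user QoE ratings are collected on a finite survey scale (equivalently $r_t \in [0,1]$ after rescaling), so $r_t$ can be written as a bounded function of $x_{t,m_t}$ plus sub-Gaussian noise with $r_t \sim \mathcal{R}(x_{t,m_t})$ as assumed in Section \ref{sec:neural_bandit}; (iii) the QPN hidden-layer width $h$ satisfies the over-parameterization condition required by NeuralUCB; and (iv) the exploration parameter $\gamma$ is chosen per the confidence-radius schedule prescribed by NeuralUCB. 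Under these conditions, the inherited bound applies verbatim.

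The third step is to simplify the inherited $\tilde{\mathcal{O}}(\tilde{d}\sqrt{T})$ expression to the $\mathcal{O}(\sqrt{T})$ form stated in the lemma. The number of candidate DNNs $M$ is fixed at pre-configuration, and the effective dimension $\tilde{d}$ depends only on the spectrum of the NTK Gram matrix of contexts drawn from the user's usage distribution; neither depends on the horizon $T$. Absorbing $\tilde{d}$, $M$, $\gamma$, and the polylogarithmic factors into the big-$\mathcal{O}$ constant then yields $R_T = \mathcal{O}(\sqrt{T})$, as claimed.

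The main obstacle, should one aim for a fully self-contained derivation rather than a black-box invocation, is verifying the NTK-based over-parameterization prerequisites of \cite{zhou2019neural}: those require $h$ to satisfy polynomial-in-$T$ lower bounds and the minimum eigenvalue of the NTK matrix to be bounded away from zero. For the compact QPN architecture used on IoT devices (a three-layer network with $8$, $16$, $8$ nodes), these width conditions cannot be checked literally, and the cleanest route is to treat the Zhou et al. bound as the statement being inherited and phrase Lemma \ref{lemma:neuralucb_regret_bound} as a corollary of it under problem-dependent constants, which is what the $\mathcal{O}(\sqrt{T})$ notation already allows.
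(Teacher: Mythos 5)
Your proposal takes essentially the same route as the paper: the paper offers no proof of Lemma \ref{lemma:neuralucb_regret_bound} beyond the single sentence citing \cite{zhou2019neural}, and your argument is a faithful (in fact more careful) unpacking of that citation, correctly identifying the $\tilde{\mathcal{O}}(\tilde{d}\sqrt{T})$ bound to be inherited and the hypotheses to be checked. Your closing caveat---that the NTK over-parameterization conditions on the width $h$ cannot literally be satisfied by the $8$--$16$--$8$ QPN used in the experiments, so the lemma must be read as inheriting the asymptotic statement under problem-dependent constants---is a legitimate gap in the paper's own treatment that your write-up handles more honestly than the original.
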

The proof for Lemma \ref{lemma:neuralucb_regret_bound} can be found in \cite{zhou2019neural}. It gives a sublinear regret $\mathcal{O}(\sqrt{T})$, meaning that NeuralUCB is \textbf{asymptotically optimal} compared to Oracle. Standard NeuralUCB solicits QoE feedback from users after every session. The users may feel these survey questions annoying if they appear constantly. In certain cases, the application developer also needs incentive schemes to motivate the user response. Therefore, a feedback solicitation scheme needs to be designed to control the solicitation cost (e.g., incentive payment and user inconvenience) of the online learning algorithm.

\section{Feedback Solicitation}\label{sec:fss}
The goal of feedback solicitation scheme (FSS) is to reduce the number of QoE solicitations while keeping the learning performance of NeuralUCB. Let $Q(t)$ denote the event of QoE solicitation in session $t$, then the solicitation cost is defined as $\lambda \cdot \mathbf{1}\{Q(t)= \text{True}\}$, where $\lambda$ is the unit cost of one executed solicitation. We define the \emph{reward} in session $t$ as $\tilde{r}_{t,m} = r_{t,m} - \lambda\cdot \mathbf{1}\{Q(t)= \text{True}\}$. The performance of FSS is measured by \emph{modified regret} (m-regret):
\begin{equation*}
\begin{split}
\tilde{R}_T & = \mathbb{E}\left[\sum\nolimits_{t=1}^T \tilde{r}_{t,m_t^*}-\tilde{r}_{t,m_t}\right] \\
& =\mathbb{E}\left[\sum\nolimits_{t=1}^T r_{t,m_t^*}-r_{t,m_t} + \lambda\cdot \mathbf{1}\{Q(t)= \text{True}\} \right] \\
& = \underbrace{\mathbb{E}\left[\sum\nolimits_{t=1}^T r_{t,m_t^*}-r_{t,m_t}  \right]}_{\text{Learning regret}} + \underbrace{\lambda\cdot\sum\nolimits_{t=1}^T\mathbf{1}\{Q(t)= \text{True}\}}_{\text{Solicitation cost}} 
\end{split}
\end{equation*}
where $\tilde{r}_{t,m_t^*}$ is the reward achieved by Oracle in session $t$ and its value equals $r_{t,m_t^*}$ because Oracle knows the user QoE pattern and does not need QoE solicitations. The m-regret consists of two parts. The first term is the same to the standard regret in \eqref{eq:regret}, which measures the learning performance and we call it \emph{learning regret}. The second term measures the \emph{solicitation cost}. Intuitively, with more frequent QoE solicitations, we are able to learn the user QoE pattern faster, thereby reducing the learning regret. However, frequently soliciting QoEs from users causes a high solicitation cost. Consider the standard learning process of NeuralUCB, although it guarantees a sublinear learning regret $\mathcal{O}(\sqrt{T})$, its solicitation cost increases linearly with the number of application sessions $\mathcal{O}(T)$, resulting in a linear m-regret.

\subsection{Feedback Solicitation Scheme}
We design feedback solicitation scheme (FSS) to strike a balance between the learning regret and solicitation cost. In the following proposition, we directly give out our basic FSS design for NeuralUCB.

\begin{proposition}[Feedback Solicitation Scheme]\label{prop:fss}
	Let $T$ be the total number of application sessions, the frequency of QoE solicitation for NeuralUCB should be set to $T^{-1/3}$ for minimizing m-regret. 
\end{proposition}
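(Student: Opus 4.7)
The plan is to parameterize the FSS by a solicitation frequency $p\in(0,1]$ (equivalently a block length $B=1/p$), decompose the m-regret into its two terms as explicit functions of $p$, and then choose $p$ to balance them.

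First I would fix a simple block-based FSS: partition the $T$ sessions into $T/B$ consecutive blocks of length $B$, solicit exactly one QoE rating per block, and freeze the QPN parameters $\bm{\theta}$ and the Gram matrix $\bm{Z}$ throughout each block (they are updated only at block boundaries). Under this scheme the total number of solicitations is $T/B=pT$, so the solicitation-cost term in $\tilde{R}_T$ equals $\lambda pT$ exactly.

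Next I would bound the learning-regret term. Lemma~\ref{lemma:neuralucb_regret_bound} tells us that with per-round feedback NeuralUCB achieves cumulative regret $\mathcal{O}(\sqrt{t})$ after $t$ updates, which means the incremental regret contributed by the $k$-th update is of order $1/\sqrt{k}$. Because the policy is frozen inside each block, the regret contributed by block $k$ is at most $\mathcal{O}(B/\sqrt{k})$, and summing over the $T/B$ blocks gives
\[
\sum_{k=1}^{T/B}\frac{B}{\sqrt{k}}\;=\;\mathcal{O}\bigl(B\sqrt{T/B}\bigr)\;=\;\mathcal{O}\bigl(\sqrt{BT}\bigr)\;=\;\mathcal{O}\bigl(\sqrt{T/p}\bigr).
\]
Combining the two pieces yields an m-regret of order $\mathcal{O}(\sqrt{T/p}+\lambda pT)$. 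Differentiating with respect to $p$ and setting the derivative to zero,
\[
-\tfrac{1}{2}\sqrt{T}\,p^{-3/2}+\lambda T \;=\; 0 \quad\Longrightarrow\quad p^{3/2}\;=\;\Theta\!\bigl(T^{-1/2}\bigr)\quad\Longrightarrow\quad p\;=\;\Theta\!\bigl(T^{-1/3}\bigr),
\]
and plugging back gives an m-regret of $\mathcal{O}(T^{2/3})$, matching the $T^{2/3}$ solicitation count and the sublinear performance loss that the paper advertises.

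The main obstacle is rigorously justifying the $1/\sqrt{k}$ per-update regret in the blocked setting, because the standard NeuralUCB proof bounds only the cumulative regret under per-round updates. I anticipate two viable routes: (i) a telescoping / elliptical-potential argument showing that freezing $\bm{Z}$ within a block inflates each confidence width $\gamma\sqrt{\bm{g}^\top\bm{Z}^{-1}\bm{g}/h}$ by a factor independent of $B$, so summing inside a block gives at most $\mathcal{O}(B/\sqrt{k})$; or (ii) a direct reduction that views the block as $B$ repetitions of one NeuralUCB step at the ``slow'' time scale $t=k$ and applies Lemma~\ref{lemma:neuralucb_regret_bound} at that scale, paying a multiplicative $B$ per block. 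Either route preserves the $\mathcal{O}(\sqrt{BT})$ scaling, after which balancing the two m-regret terms is a one-variable calculus exercise whose minimizer is $p=\Theta(T^{-1/3})$.
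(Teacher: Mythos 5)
Your proposal is correct and follows essentially the same route as the paper: the paper gives no standalone proof of Proposition~\ref{prop:fss} but establishes it through the proof of Theorem~\ref{theo:bound_m_regret} in Appendix~\ref{proof:theo:bound_m_regret}, which is exactly your route~(ii) --- block the horizon with period $B$, pay a multiplicative $B$ per block against the $\mathcal{O}(\sqrt{L})$ cumulative bound at the slow time scale (via Lemma 5.3 of the NeuralUCB paper), and add the solicitation cost $\lambda L = \lambda T/B$, yielding $\mathcal{O}(\sqrt{BT} + \lambda T/B)$. The only difference is that you explicitly carry out the one-variable balancing to show $B = T^{1/3}$ is the minimizer of this upper bound, whereas the paper simply asserts that choice and verifies the resulting $\mathcal{O}(T^{2/3})$ rate.
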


\begin{figure}[tb]
	\centering
	\includegraphics[width = 0.95\linewidth]{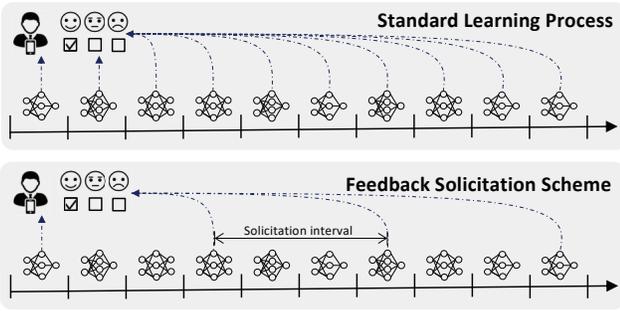}
	\caption{Illustration of feedback solicitation scheme.}
	\label{fig:illu_fss}
\end{figure}

Proposition \ref{prop:fss} indicates that the QoE solicitation is performed periodically with a fixed frequency $T^{-1/3}$, i.e., QoE solicitation happens every $T^{1/3}$ sessions. If NeuralUCB runs $T=1000$ sessions, OIC only solicits QoE for $1000^{2/3} = 100$ times, i.e., a 90\% reduction of solicitation cost. Fig. \ref{fig:illu_fss} illustrates the designed FSS. The solicitation frequency is determined according to the regret upper bound $\mathcal{O}(\sqrt{T})$ (Lemma \ref{lemma:neuralucb_regret_bound}) of NeuralUCB. FSS aims to guarantee a sublinear m-regret such that NeuralUCB is still asymptotic optimal with reduced QoE solicitations. The theorem below gives an upper bound of m-regret of NeuralUCB with FSS. 
\begin{theorem}[m-regret of NeuralUCB with FSS] \label{theo:bound_m_regret}
	Suppose NeuralUCB is applied with FSS designed in Proposition \ref{prop:fss}. For any time horizon $T$, the upper bound of its m-regret is $\mathcal{O}(\tilde{R}_T) = \mathcal{O}(T^{2/3})$. 
\end{theorem}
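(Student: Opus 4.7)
The plan is to use the m-regret decomposition already given in the excerpt, namely $\tilde R_T = (\text{learning regret}) + (\text{solicitation cost})$, and to bound each summand at rate $\mathcal{O}(T^{2/3})$ separately. The solicitation part is immediate: Proposition \ref{prop:fss} fires the indicator $\mathbf{1}\{Q(t)=\text{True}\}$ on a $T^{-1/3}$ fraction of the $T$ sessions, so the cost equals $\lambda T^{2/3}=\mathcal{O}(T^{2/3})$. The substantive work is showing that the learning regret, despite the QPN now being trained on only $T^{2/3}$ samples instead of $T$, still grows no faster than $\mathcal{O}(T^{2/3})$.

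For the learning regret I would extend the analysis of NeuralUCB behind Lemma \ref{lemma:neuralucb_regret_bound} to the \emph{batched-update} regime forced by FSS. Partition the horizon into $B=T^{2/3}$ consecutive batches of length $L=T^{1/3}$; within batch $b$ the parameters $\bm{\theta}$ and the matrix $\bm Z$ used in Lines \ref{line:selection1}--\ref{line:selection2} of Algorithm \ref{alg:Neural-UCB} are frozen at the values $\bm Z_{b-1}$ produced by the most recent \texttt{TrainQPN} call. The optimism argument underpinning Lemma \ref{lemma:neuralucb_regret_bound} then yields the per-session bound $r_{t,m_t^*}-r_{t,m_t}\le 2\gamma\,\delta_{t,m_t}$, where $\delta_{t,m}:=\sqrt{\bm g_{t,m}^\top\bm Z_{b-1}^{-1}\bm g_{t,m}/h}$ for every $t$ in batch $b$. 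Summing over $t$ and applying Cauchy--Schwarz, together with the observation that $\bm Z$ is updated only $B$ times so the widths are effectively constant across each batch, gives
\begin{equation*}
\sum_{t=1}^T \delta_{t,m_t}\;\le\;\sqrt{T\sum_{t=1}^T\delta_{t,m_t}^2}\;\le\;\sqrt{T\cdot L\cdot\sum_{b=1}^B\bar\delta_b^{\,2}},
\end{equation*}
where $\bar\delta_b$ is a per-batch representative width. The elliptical-potential-style step inside Lemma \ref{lemma:neuralucb_regret_bound}, now applied to the $B$ gradient vectors actually appended to $\bm Z$, controls $\sum_b \bar\delta_b^{\,2}=\widetilde{\mathcal O}(1)$ in $T$, leaving a learning regret of order $\sqrt{T\cdot L}=\widetilde{\mathcal O}(T^{2/3})$.

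Adding the two contributions gives $\tilde R_T=\mathcal O(T^{2/3})+\mathcal O(T^{2/3})=\mathcal O(T^{2/3})$, which matches the statement. The frequency $T^{-1/3}$ is precisely the balancing point: for a generic frequency $f$ the learning regret scales like $\sqrt{T/f}$ and the solicitation cost like $\lambda T f$, whose sum is minimized at $f\asymp T^{-1/3}$. The main technical obstacle is the batched adaptation of Lemma \ref{lemma:neuralucb_regret_bound}: the original proof feeds a fresh gradient into $\bm Z$ every round and invokes the elliptical-potential lemma on all $T$ updates, whereas here only $B=T^{2/3}$ updates are actually made, so one has to (i) argue that within a frozen batch the widths $\delta_{t,m}$ remain bounded and can be traded for a single representative $\bar\delta_b$ with at most an $L$-factor loss, and (ii) verify that the elliptical-potential bound on the $B$ appended vectors still delivers logarithmic-in-$T$ control of $\sum_b\bar\delta_b^{\,2}$. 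These two pieces are what turn the trivial $\mathcal O(T)$ learning-regret estimate (which would be obtained by naively bounding each per-session regret by a constant) into the desired $\mathcal O(T^{2/3})$, and they are where the NeuralUCB regularity assumptions — bounded features/gradients and a well-conditioned $\bm Z$ — will need to be invoked explicitly.
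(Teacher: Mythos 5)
Your proposal is correct and follows essentially the same route as the paper's proof: both block the horizon by solicitation epochs of length $T^{1/3}$, bound the within-block learning regret by the block length times a representative confidence width (the paper's $\mathcal{Q}(x_{t_l},l,\delta)$ at the per-block argmax, your $\bar\delta_b$), control the sum of those widths over the $T^{2/3}$ blocks via the NeuralUCB machinery to get $T^{1/3}\cdot\mathcal{O}(\sqrt{T^{2/3}})=\mathcal{O}(T^{2/3})$, and add the $\lambda T^{2/3}$ solicitation cost. The only cosmetic difference is that you unpack the $\mathcal{O}(\sqrt{L})$ summation step into Cauchy--Schwarz plus the elliptical-potential lemma on the appended gradients, whereas the paper invokes that bound directly from Lemma 5.3 of the NeuralUCB reference; both treatments share the same (acknowledged) hand-wave in identifying the per-block maximal width with the width at the update point.
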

\begin{proof}
	See in Appendix \ref{proof:theo:bound_m_regret}.
\end{proof}

The above theorem indicates a sublinear upper bound $\mathcal{O}(T^{2/3})$ for m-regret of NeuralUCB-FSS. This means that our method is able to learn an asymptotically optimal DNN selection policy while keeping the number of QoE solicitations as low as $T^{2/3}$. Note that the regret upper bound given in Theorem \ref{theo:bound_m_regret} is for the worst case with an arbitrarily bad initial QPN. With knowledge transferring, the regret incurred by NeuralUCB is actually much lower, which is verified by our experimental results in Section \ref{sec:experiment}. 

\subsection{FSS with Unknown Time Horizon}
One limitation of FSS is that it needs to know the time horizon $T$ in advance to determine the solicitation frequency. However, the time horizon is often unknown in practice, in which case FSS cannot be applied directly. In the sequential, we show a variant of FSS called FSS-UT that is able to work with an unknown time horizon $T$. FSS-UT keeps a counter $c$ that counts the number of QoE solicitations executed up to application session $t$. If the counter satisfies $c \leq t^{1-\alpha}$, then a QoE solicitation will be executed in session $t$. Fig. \ref{fig:illu_fss_ut} illustrates the designed FSS-UT. FSS-UT starts with a relatively small solicitation interval to ramp up QoE data for training the QoE predictor, and then gradually increase the solicitation interval over time. FSS-UT guarantees that $t^{1-\alpha}$ QoE solicitations will be executed up to session $t$. If we let $\alpha = 1/3$, the number of QoE solicitations executed by FSS-UT in $T$ sessions (i.e., $T^{2/3}$) is the same as that of FSS (Proposition \ref{prop:fss}), and in this case, the solicitation cost of FSS-UT is the same as FSS. The theorem below provides a performance guarantee of FSS-UT in terms of m-regret.

\begin{theorem}[m-regret of NeuralUCB with FSS-UT]\label{theo:regret_fss_ut}
	Suppose NeuralUCB is applied with FSS-UT using parameter $\alpha$. For any time horizon $T$, the upper bound of m-regret is $\mathcal{O}(\tilde{R}_T) = \mathcal{O}(T^z)$ with $z = \max\left\{\frac{\alpha+1}{2},1-\alpha\right\}$. 
\end{theorem}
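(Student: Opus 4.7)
The plan is to mirror the two-term decomposition used for Theorem~\ref{theo:bound_m_regret}, splitting the m-regret into learning regret and solicitation cost, analyzing each under the FSS-UT schedule, and then taking the worse of the two exponents.

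\textbf{Solicitation count.} First I would observe that the FSS-UT rule---trigger a solicitation in session $t$ whenever the running counter satisfies $c\le t^{1-\alpha}$---guarantees that the total number of solicitations through session $T$ is at most $N := \lceil T^{1-\alpha}\rceil$, so the solicitation-cost term is bounded directly as $\lambda\sum_{t=1}^T \mathbf{1}\{Q(t)=\text{True}\} = \mathcal{O}(T^{1-\alpha})$.

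\textbf{Learning regret with variable batches.} Next I would enumerate the solicitation sessions as $t_1<\cdots<t_N$, with $t_k\asymp k^{1/(1-\alpha)}$ (so that exactly $k$ solicitations have occurred by session $t_k$), and the batch length $\Delta t_k := t_{k+1}-t_k \asymp k^{\alpha/(1-\alpha)}$. Because QPN and the covariance-like matrix $\bm{Z}_t$ in Algorithm~\ref{alg:Neural-UCB} are updated only upon a solicitation, NeuralUCB keeps selecting the same DNN throughout each batch, and the per-round suboptimality gap during batch $k$ equals the gap $\delta_k$ of the arm chosen at the $k$-th retraining. Separating the maximal batch length from the cumulative gaps gives
\begin{equation*}
\sum_{k=1}^{N}\Delta t_k\,\delta_k \;\le\; \max_{k\le N}\Delta t_k \cdot \sum_{k=1}^{N}\delta_k.
\end{equation*}
The first factor is $\mathcal{O}(N^{\alpha/(1-\alpha)})=\mathcal{O}(T^{\alpha})$, while the second is $\mathcal{O}(\sqrt{N})=\mathcal{O}(T^{(1-\alpha)/2})$ by applying Lemma~\ref{lemma:neuralucb_regret_bound} to the compressed process driven by the $N$ collected context--QoE pairs. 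Multiplying yields a learning-regret bound of $\mathcal{O}(T^{(\alpha+1)/2})$. Adding the solicitation cost gives $\tilde{R}_T=\mathcal{O}(T^{(\alpha+1)/2})+\mathcal{O}(T^{1-\alpha})=\mathcal{O}(T^z)$ with $z=\max\{(\alpha+1)/2,\,1-\alpha\}$, as claimed; as a sanity check, the two exponents balance at $\alpha=1/3$, consistently recovering the $\mathcal{O}(T^{2/3})$ rate of Theorem~\ref{theo:bound_m_regret}.

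\textbf{Main obstacle.} The delicate step will be verifying that Lemma~\ref{lemma:neuralucb_regret_bound} can indeed be invoked on the ``compressed'' solicitation subsequence rather than on the full session stream, because under FSS-UT the batches grow polynomially with $t$ whereas Theorem~\ref{theo:bound_m_regret} exploited a uniform batch of length $T^{1/3}$. This requires arguing that the NeuralUCB confidence machinery (the matrix $\bm{Z}_t$ and the gradient features in Line~\ref{line:selection1} of Algorithm~\ref{alg:Neural-UCB}) is driven solely by solicited samples, so that the sequence $\{\delta_k\}_{k=1}^N$ produced on the retraining instants obeys exactly the same cumulative-gap bound that Lemma~\ref{lemma:neuralucb_regret_bound} gives for a horizon-$N$ run of NeuralUCB with per-round solicitations. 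Once this reduction is in place, the crude ``max $\times$ sum'' splitting above and the final exponent comparison are routine.
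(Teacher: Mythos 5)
Your proposal follows essentially the same route as the paper's proof: decompose the m-regret into learning regret plus solicitation cost, bound each variable-length batch by its length times the worst per-round confidence term, pull out the maximal batch length $\mathcal{O}(L^{1/(1-\alpha)-1})=\mathcal{O}(T^{\alpha})$, and apply the $\mathcal{O}(\sqrt{L})$ NeuralUCB cumulative bound (via the per-round bound $\mathcal{Q}$ of Lemma~\ref{lemma:neuralUCB_perslot}) to the $L=\lceil T^{1-\alpha}\rceil$ solicited rounds. One small caveat: the selected DNN need not stay fixed within a batch (the context $x_{t,m}$ changes every session even while $\bm{\theta}$ and $\bm{Z}$ are frozen), but the paper sidesteps this by taking the within-batch maximum of $\mathcal{Q}$ rather than a per-batch arm gap, which is exactly your max-times-sum splitting, so the argument goes through unchanged.
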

\begin{proof}
	see in Appendix \ref{proof:theo:regret_fss_ut}.
\end{proof}
In the above theorem, $\mathcal{O}(T^{(\alpha+1)/2})$ is the order of learning regret, $\mathcal{O}(T^{1-\alpha})$ is the order of solicitation cost, and the balance is achieved at $\alpha = 1/3$. Therefore, the lowest order of m-regret is $\mathcal{O}(T^{2/3})$. 
\begin{figure}[tb]
	\centering
	\includegraphics[width = 0.95\linewidth]{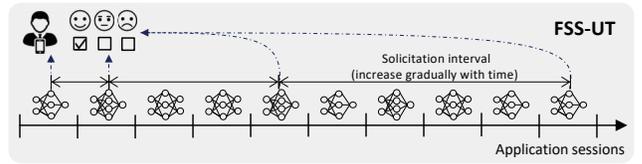}
	\caption{Illustration of FSS-UT.}
	\label{fig:illu_fss_ut}
\end{figure}

\section{Learning with Aggregated QoE}\label{sec:aggregated}
The previous sections have shown the online learning algorithm, NeuralUCB, and feedback solicitation schemes for solicitation cost reduction. However, the proposed methods rely on two underlying assumptions: 1) only one DNN is selected in each application session and the selected DNN is used during the whole session; 2) the user QoE, if solicited, accurately reflects the QoE of a single DNN used in the session. In practice, an application session can be quite long and the context may change during a session. For example, while using the application, the user may move to different locations and the device battery level will decrease. This requires the application to reconfigure its DNN adaptively according to context changes. In this case, a QoE feedback solicited at the end of an application session may reflect the user experience over all (potentially different) DNNs used in that session, and we call this kind of QoE value aggregated QoE. Fig. \ref{fig:aggregated_illu} gives an illustration of aggregated QoEs. The standard NeuralUCB presented previously cannot handle the aggregated QoE. In this section, we develop a feedback refinement approach as a subroutine of NeuralUCB to deal with aggregated QoEs. 

\begin{figure}[htb]
	\centering
	\includegraphics[width = 0.95\linewidth]{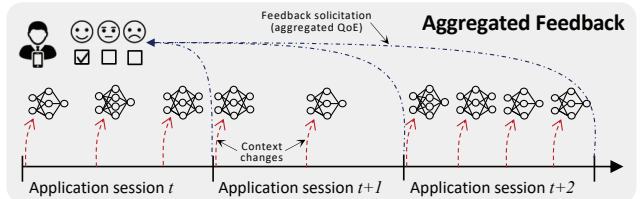}
	\caption{Illustration of aggregated QoEs.}
	\label{fig:aggregated_illu}
\end{figure}

The goal of feedback refinement is to estimate QoEs for individual DNNs (referred to as individualized QoE) based on aggregated QoEs collected from users. A consideration here is that the estimated individualized QoEs are subjected to the constraint that a combination (e.g. a weighted sum) of individualized QoEs should equal the aggregated QoE. 

Before presenting the feedback refinement approach, we need a modelling of aggregated QoEs. In application session $t$, we let $K_t$ denote the times of context change. The set of contexts appeared in session $t$ is collected in $\bm{x}^{\texttt{AG}}_t = \left\{\bm{x}^{k}_t\right\}^{K_t}_{k=1}$ where  $\bm{x}^{k}_t = \{x^{k}_{t,m}\}_{m\in\mathcal{M}}$ is the $k$-th context of candidate DNNs. Let $m^k_t$ denote the selected DNN given context $\bm{x}^{k}_t$. The set of DNNs selected in session $t$ is denoted by $\bm{m}^\texttt{AG}_t = \{m^k_t\}^{K_t}_{k = 1}$. The aggregated QoE collected at the end of session $t$ is denoted by $r^\texttt{AG}_t$. The contexts $\bm{x}^\texttt{AG}_t$, selected DNNs $\bm{m}^\texttt{AG}_t$, and aggregated QoE $r^\texttt{AG}_t$ are stored in memory $\mathcal{
	X}^\texttt{AG} \gets \mathcal{X}^\texttt{AG} \cup (\bm{x}^\texttt{AG}_t, \bm{m}^\texttt{AG}_t, r^\texttt{AG}_t)$.

The feedback refinement approach is a two-step loop towards convergence: 1) individualized QoEs are first estimated with the assistance of the QoE predictor, and then 2) estimated QoEs will be used in return to update the QoE predictor. For example, suppose OIC need to update QPN in session $t$ with the collected aggregated QoEs, the feedback refinement approach iterates between two steps:

\noindent \textbf{Step 1:} Given memory $\mathcal{X}$, we estimate individualized QoEs with the current QPN $\hat{r}(\cdot;\bm{\theta})$. For each sample $(\bm{x}^\texttt{AG}_\tau, \bm{m}^\texttt{AG}_\tau, r^\texttt{AG}_\tau) \in \mathcal{X}^\texttt{AG}$ with some $\tau \leq t$, we compute its group residual:
\begin{align}
\delta_{\tau} = r^\texttt{AG}_{\tau} - \frac{1}{K_{\tau}}\sum\nolimits_{k=1}^{K_{\tau}}\hat{r}\left(x^k_{\tau,m^k_\tau};\bm{\theta}\right).
\end{align}

Then, the individualized QoE for DNN $m^k_{\tau}$ is estimated by
\begin{align}\label{eq:individual_r_est}
r^k_{\tau} = \hat{r}\left(x^k_{\tau,m^k_\tau};\bm{\theta}\right) + \delta_{\tau}.
\end{align}

It can be easily verified that the average of individualized QoEs estimated by \eqref{eq:individual_r_est} equals the aggregated QoE.

\noindent \textbf{Step 2:} The estimated individualized QoEs $r^k_\tau$ and context $x^k_{\tau,m^k_\tau}$ are stored in dataset $\mathcal{X}$ for updating the QPN with \texttt{TrainQPN} (Algorithm \ref{alg:TrainNN}). The only difference is that we use estimated individualized QoEs from Step 1 instead of ground-truth QoEs collected from users. The updated QPN will be used in the next iteration.

\begin{algorithm}
	\caption{Feedback Refinement Approach} \label{alg:aggregated feedback}
	\begin{algorithmic}[1]
		\While {group residual does not converge} 
		\For{$(\bm{x}^\texttt{AG}_\tau, \bm{m}^\texttt{AG}_\tau, r^\texttt{AG}_\tau) \in \mathcal{X}^\texttt{AG}$}
		\State Compute residual $\delta_{\tau} = r^\texttt{AG}_{\tau} - \frac{1}{K_{\tau}}\sum\limits_{k=1}^{K_{\tau}}\hat{r}(x^k_{\tau,m^k_\tau};\bm{\theta})$
		
		\State Get individualized QoE: $r^k_{\tau} = \hat{r}(x^k_{\tau,m^k_\tau};\bm{\theta}) + \delta_{\tau}$
		\State Store $(x^k_{\tau,m^k_\tau}, r^k_\tau)$ in $\mathcal{X}$
		\EndFor
		\State $\bm{\theta}\gets \texttt{TrainQPN}(\mathcal{X})$
		\EndWhile
	\end{algorithmic}
\end{algorithm}

The convergence of the above process is proven in \cite{bhowmik2019estimagg}. The feedback refinement approach can be easily extended to a scenario where collected QoEs contain both aggregated QoEs and non-aggregated QoEs. In this case, aggregated QoEs are stored in $\mathcal{X}^\texttt{AG}$ and will be used to estimate individualized QoEs. The non-aggregated QoEs are stored in $\mathcal{X}$ and are directly used to train QPN. Intuitively, if the fraction of aggregated QoEs is smaller, OIC can achieve better performance. In addition, FSS designed in Section \ref{sec:fss} can also be applied in the scenario of aggregated QoE to reduce the solicitation cost.

\section{Experiments and Results} \label{sec:experiment}
Due to the lack of available large-scale real-world datasets, we first run experiments on synthetic data to show the general performance of our method. We also collect real-world data from real users, which will be used to show the efficacy of customization.

\subsection{Numerical Experiments}
We first construct a mapping from context to user QoE. The user QoE is formulated as a weighted sum of \emph{accuracy} and \emph{delay}: the QoE of user $i$ is defined as $\texttt{QoE}_i = w^\texttt{a}_i a_i + w^\texttt{d}_i d_i$, where $w^\texttt{a}_i$ is the weight for the service accuracy $a_i$ and $w^\texttt{d}_i$ is the weight for the service delay. These weights are different for different users to capture the heterogeneous user preference. Both service performance (accuracy $a_i$ and delay $d_i$) and the user weights ($w^\texttt{a}_i$ and $w^\texttt{d}_i$) depend on the context information that includes \emph{brightness}, \emph{location}, \emph{CPU temperature}, \emph{time}, and \emph{battery level}, \emph{DNN nominal accuracy}, and \emph{DNN nominal delay}. Three DNN models are considered: MobileNet-v2 (nominal accuracy 70.8\%, nominal delay 12ms), Inception-v2 (nominal accuracy 73.5\%, nominal delay 59ms), and Inception-v3 (nominal accuracy 77.5\%, nominal delay 148ms). The nominal accuracy and nominal delay are from TensorFlow Lite website \cite{tensorflowmodels}, which are measured on ILSVRC 2012 image classification tasks \cite{ILSVRC} with Pixel 3 smartphone (Android 10). Note that the nominal performance can be different from the in-use performance because the user device and classification tasks are different.

The inference accuracy for user $i$ is determined by the nominal accuracy ($\texttt{nacc}_m$) of selected DNN $m$ and the context ambient brightness ($\texttt{brt}$): $a_i = \texttt{brt}^{-2\cdot \texttt{nacc}_m}/\texttt{brt}^{-2}$. This function indicates that a DNN with higher nominal accuracy is less sensitive to the brightness change. The users may have different requirement on the inference accuracy at different location, and therefore we parameterize the weight by the location context (\texttt{loc}): $w^\texttt{a}_i = w_i(\texttt{loc})$. The variable \texttt{loc} has 10 discrete values, and the value of $w_i(\texttt{loc})$ is sampled from a normal distribution $\mathcal{N}\left(\mu(\texttt{loc}),\delta^2(\texttt{loc})\right)$ where the mean value $\mu(\texttt{loc})$ and the standard deviation $\delta(\texttt{loc})$ are from a uniform distribution $[0,2]$. The inference delay is affected by the CPU temperature ($\texttt{ctemp}\in [0,1]$). Usually, the CPU frequency diminishes at the rate of approximately 150 Hz per degree Celsius, and therefore the inference delay may become larger when the CPU temperature is high. Formally, the inference delay is calculated by $d_i = \texttt{ctemp} \cdot \texttt{ndel}_m$ where $\texttt{ndel}_m$ is the nominal delay of DNN $m$. The weight for the service delay $w^\texttt{d}_i$ depends on the \emph{time} and \emph{battery level}: for example, the user may require faster response during a certain time span; and when the battery level is low, the user may want the computation to be completed in a shorter time to save energy (assuming the CPU frequency is fixed). The time of the delay is discretized into 24 values. For user $i$, we choose a weight $w_i(\texttt{time}) \in \mathcal{N}(1,0.5) $. The state of battery level has two values, $\texttt{battery}\in\{1:\text{`high'}, 2:\text{`low'}\}$. For each user, we set $w_i(\texttt{battery} = 1) = 1$ and choose $w_i(\texttt{battery} = 0) \in \mathcal{N}(3,1)$. The weight for the service delay is determined by $w^\texttt{d}_i = w_i(\texttt{time}) \cdot w_i(\texttt{battery})$.

We simulate 50 users $(i = 1, 2,\dots,50)$, and run our OIC method for each user. The QPN used in NeuralUCB has three fully-connected hidden layers with 8, 16, 8 nodes, respectively. The size of QPN is 4 KB; the inference delay of QPN is $0.50 \pm 0.16ms$; the training delay of QPN is $100.8 \pm 3.5ms$. Therefore, the complexity of NeuralUCB is acceptable to IoT devices. The results shown below give the average performance for all 50 users.

\subsubsection{Performance Comparisons}
To show the superiority of OIC, we compare OIC with four benchmarks. 1) \textbf{Oracle}: Oracle knows user QoE patterns and selects the best-fit DNN in each session. 2) \textbf{LinUCB}: LinUCB \cite{li2010contextual} is a widely-used contextual multi-armed bandit algorithm. It assumes that user QoE is a linear function of context information. 3) \textbf{Fixed-DNN}: one single DNN is used for on-thing inference all the time. 4) \textbf{Random}: a DNN is selected randomly in each application session. 

\begin{figure}[htb]
	\centering
	\subfigure[Average QoE.]
	{
		\includegraphics[width=0.45\linewidth]{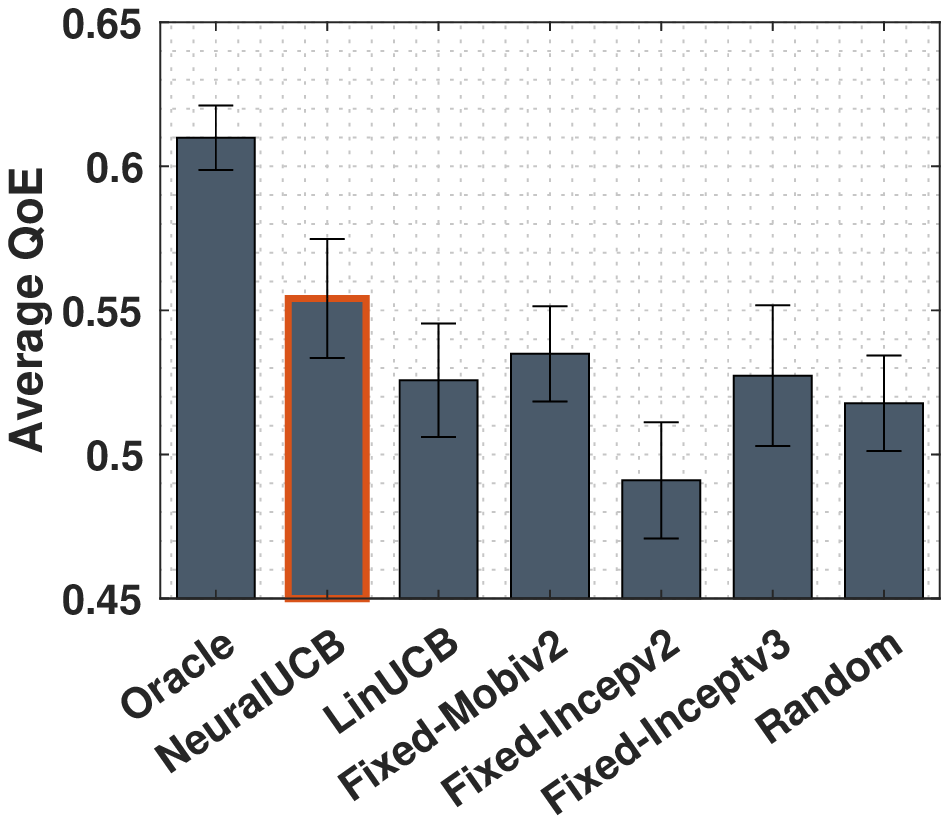}
		\label{fig:ave_qoe_benchmarks}
	}
	\subfigure[Regret.]
	{
		\includegraphics[width=0.45\linewidth]{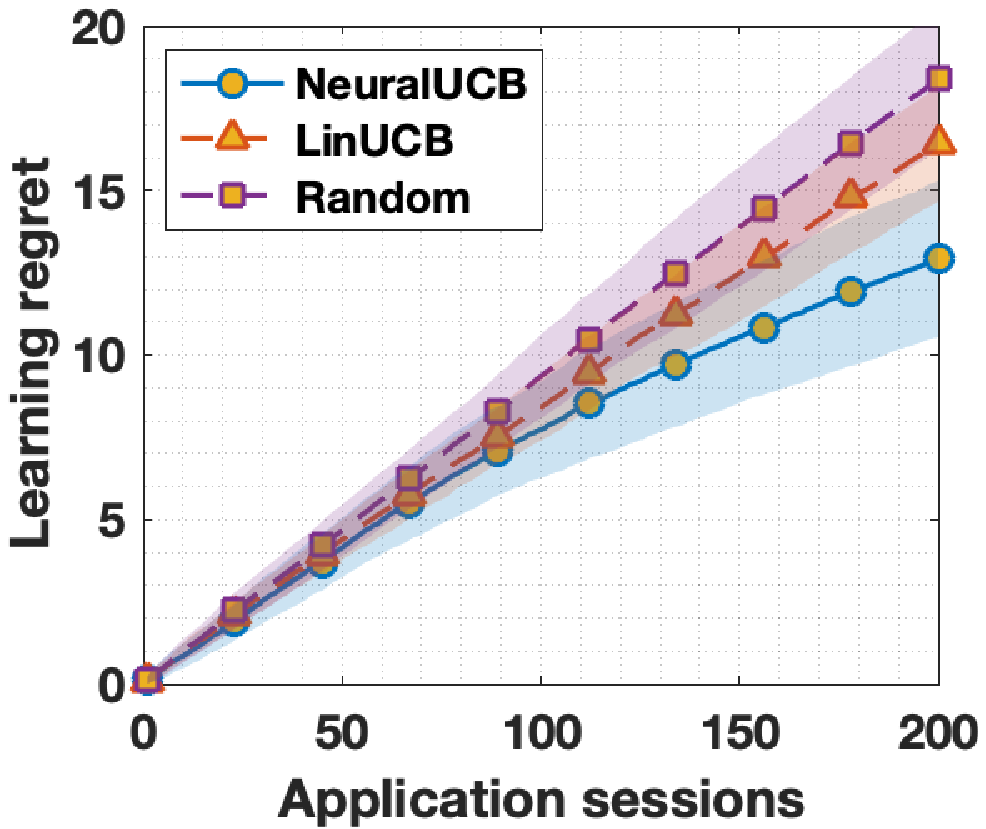}
		\label{fig:regret_benchmarks}
	}
	\caption{Performance comparison with benchmarks.}
	\label{fig:performance_compare}
\end{figure}

Fig. \ref{fig:performance_compare} shows the performance of OIC and other benchmarks in terms of average QoE and regret. Fig. \ref{fig:ave_qoe_benchmarks} 
compares the average QoE (averaged over sessions and users). As expected, Oracle offers the best performance because it knows user QoE patterns and selects the best-fit DNN in each session. Among the others, OIC delivers the highest QoE to users. By comparing OIC with LinUCB, we can infer that on-thing inference customization does not always work if the learning mechanism is inappropriate. Because the constructed QoE function is non-linear, LinUCB is unable to learn user QoE patterns precisely. Selecting DNNs based on inaccurate QoE predictions, LinUCB performs even worse than Fixed-DNN. 

Fig. \ref{fig:regret_benchmarks} shows the regret of NeuralUCB, LinUCB, and Random in 200 sessions. We can see that the regrets of LinUCB and Random grow linearly with the number of sessions. By contrast, the regret of NeuralUCB exhibits a clear sublinear trend, which means that it can achieve asymptotic optimality. 

\begin{figure}[htb]
	\centering
	\subfigure[Average QoE.]
	{
		\includegraphics[width=0.45\linewidth]{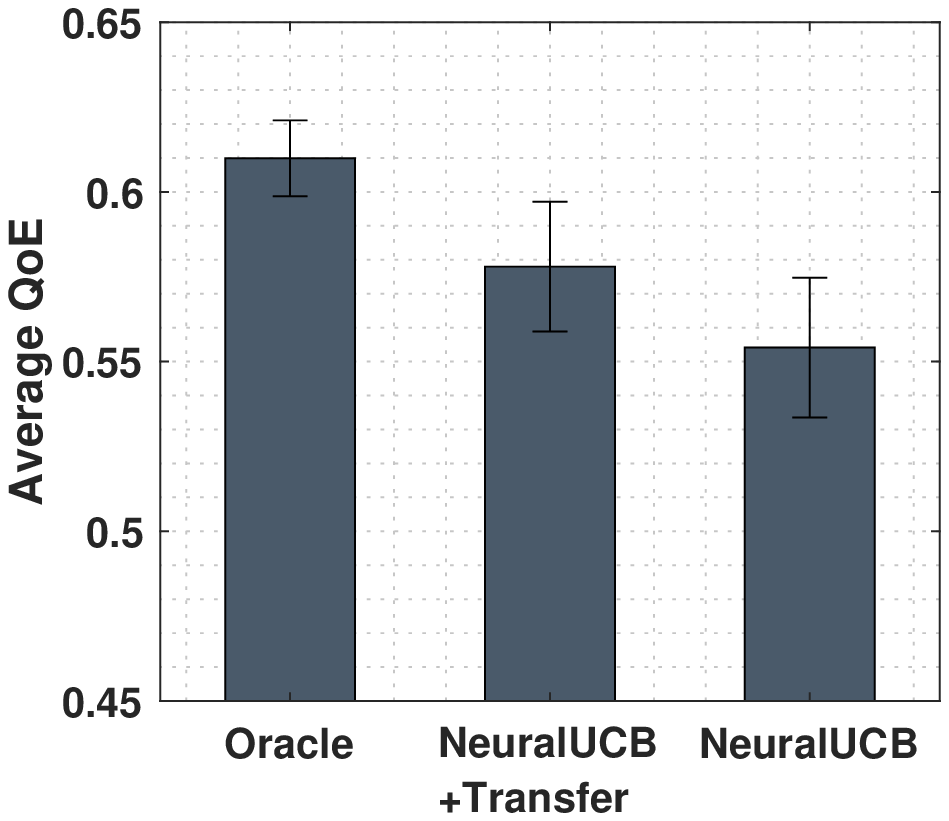}
		\label{fig:ave_qoe_transfer}
	}
	\subfigure[Regret.]
	{    
		\includegraphics[width=0.45\linewidth]{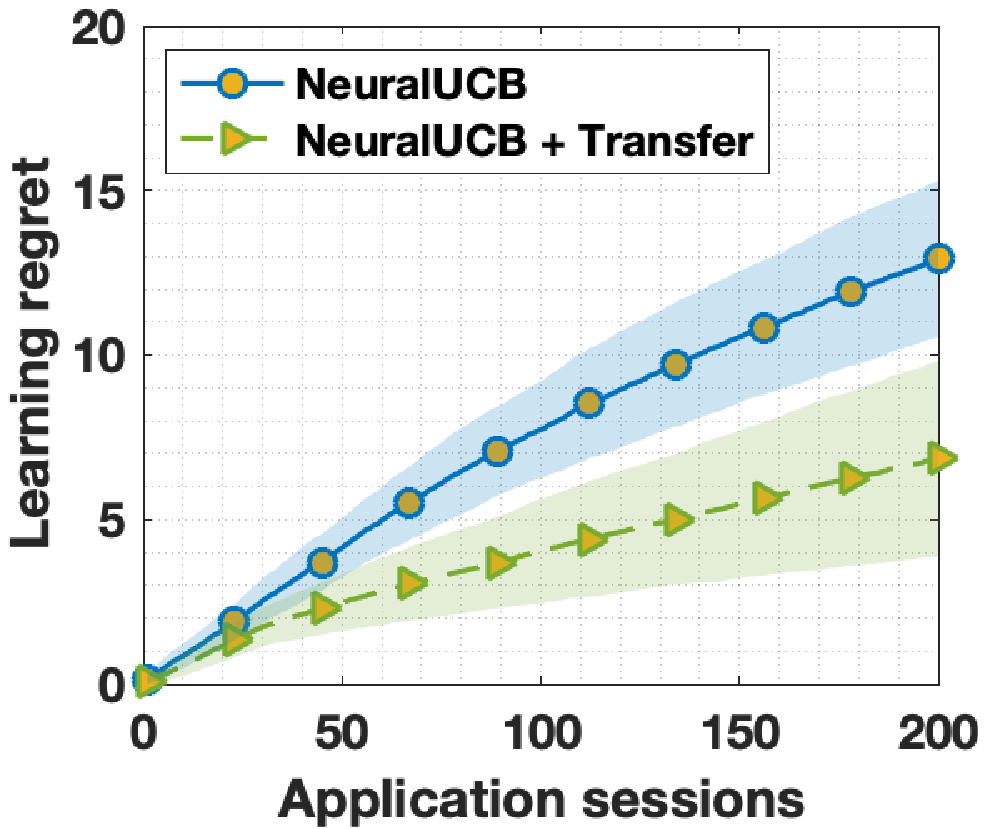}
		\label{fig:regret_transfer}
	}
	\caption{Efficacy of knowledge transfer.}
	\label{fig:knowledge_transfer}
\end{figure}

\subsubsection{Efficacy of Knowledge Transfer}
It can be observed from Fig. \ref{fig:ave_qoe_benchmarks} that the gap between Oracle and NeuralUCB in average QoE is still large. We implement the knowledge transfer technique to reduce this gap. The first step to carry out knowledge transfer is building a pre-trained QPN. We collect 10 QoE samples from each user, and a total of 500 QoE samples are used to pre-train a QPN. Each user is initialized with this pre-trained QPN and then runs NeuralUCB. Fig. \ref{fig:knowledge_transfer} reports the performance of NeuralUCB with knowledge transfer. Fig.\ref{fig:ave_qoe_transfer} shows that incorporating knowledge transfer provides obvious improvements in average QoE. Fig. \ref{fig:regret_transfer} compares the performance in terms of regret. We can see that NeuralUCB+Transfer exhibits a stronger sublinearity in the regret curve, which means that it can find the optimal strategy faster. In the rest of the performance analysis, we run NeuralUCB with knowledge transfer by default.

\begin{figure}[tb]
	\centering
	\includegraphics[width =0.95\linewidth]{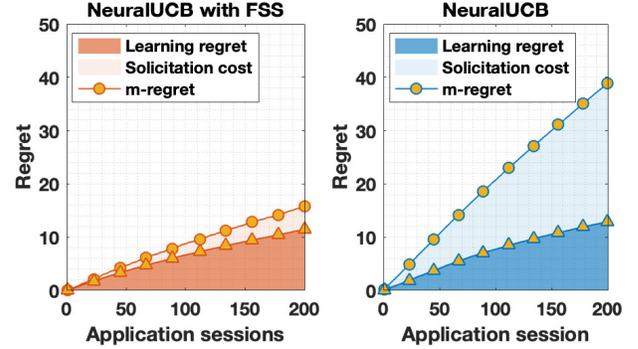}
	\caption{m-Regret of FSS with known time horizon.}
	\vspace{-0.15 in}
	\label{fig:fss}
\end{figure}

\subsubsection{Performance of Feedback Solicitation Schemes}
Fig. \ref{fig:fss} shows m-regret of NeuralUCB when it is applied with and without FSS. We run experiments with $T= 200$ sessions. According to the design, NeuralUCB+FSS only needs $\lceil200^{2/3}\rceil = 35$ solicitations which is much smaller than that of standard NeuralUCB (200 solicitations). Fig. \ref{fig:fss} compares m-regret of NeuralUCB-FSS and NeuralUCB. The unit solicitation cost $\lambda$ is set to 0.13. We see that m-regret of NeuralUCB increases almost linearly due to its high solicitation cost. By contrast, NeuralUCB+FSS achieves a sublinear m-regret. In particular, using FSS does not harm the learning efficiency as we can see that the learning regret of NeuralUCB+FSS is similar to that of NeuralUCB. This means that FSS keeps the asymptotic optimality of NeuralUCB and reduces the increase of solicitation cost to a sublinear rate.

\subsubsection{Performance of FSS-UT}
Fig. \ref{fig:fss_ut_exp} shows the performance of FSS-UT in terms of reward and m-regret. We vary the parameter $\alpha$ in FSS-UT from 0.1 to 1.0. Fig. \ref{fig:alpha_m_regret} provides a decomposition of m-regret after 200 sessions. We see that the solicitation cost decreases with $\alpha$ and the learning regret increase with $\alpha$. The lowest m-regret is achieve at $\alpha = 0.4$ which is close to our theoretical analysis $\alpha = 0.33$. Fig. \ref{fig:alpha_m_regret_curve} shows the m-regret curves for FSS-UT, we see that setting $\alpha \in [0.3,0.9]$ achieves sublinear m-regrets.     
\begin{figure}[tb]
	\centering
	\subfigure[Decomposition of m-regrets.]
	{
		\includegraphics[width=0.45\linewidth]{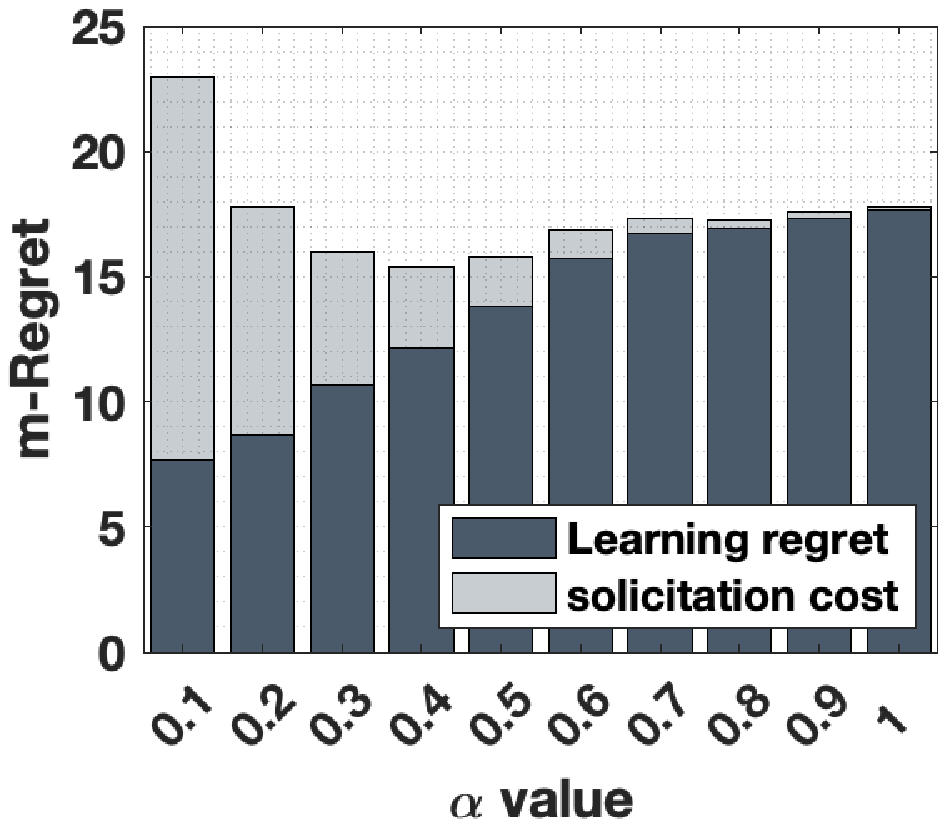}
		\label{fig:alpha_m_regret}
	}
	\subfigure[m-regret curves.]
	{
		\includegraphics[width=0.45\linewidth]{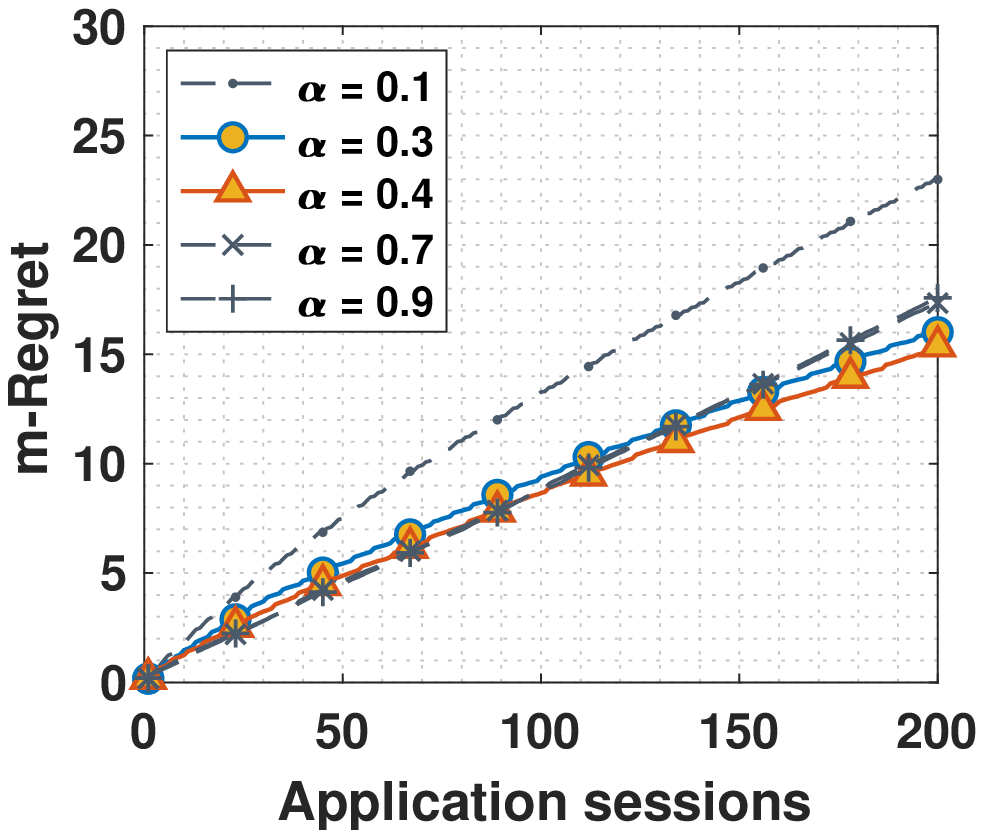}
		\label{fig:alpha_m_regret_curve}
	}
	\caption{Performance of NeuralUCB with FSS-UT.}
	\label{fig:fss_ut_exp}
\end{figure}

\subsubsection{Feedback Refinement for Aggregated QoEs}
We use two schemes to generate aggregated QoE. 

1) The first scheme is \emph{feedback averaging}. It assumes that all used DNNs in a session have the same impact on the aggregated QoE. Feedback averaging generates aggregated QoEs $r^\texttt{AG}_{t}$ using: $r^\texttt{AG}_{t} = 1/K_t \sum_{t = 1}^{K_t} r^k_{t}$ where $r^k_t$ is the QoE for $k$-th DNN used in session $t$. 

2) The second scheme is \emph{sequence-aware aggregation}. It assumes that the impact of DNNs on the aggregated QoE depends on the selection sequence. We let DNNs picked later in the session have a higher impact: $r^\texttt{AG}_{t} = \sum_{k = 1}^K w_k r^k_{t}$ with weights $w_k=2^k/\sum^K_{i=1}2^i$.

\begin{figure}[tb]
	\centering
	\subfigure[Average QoE.]
	{
		\includegraphics[width=0.45\linewidth]{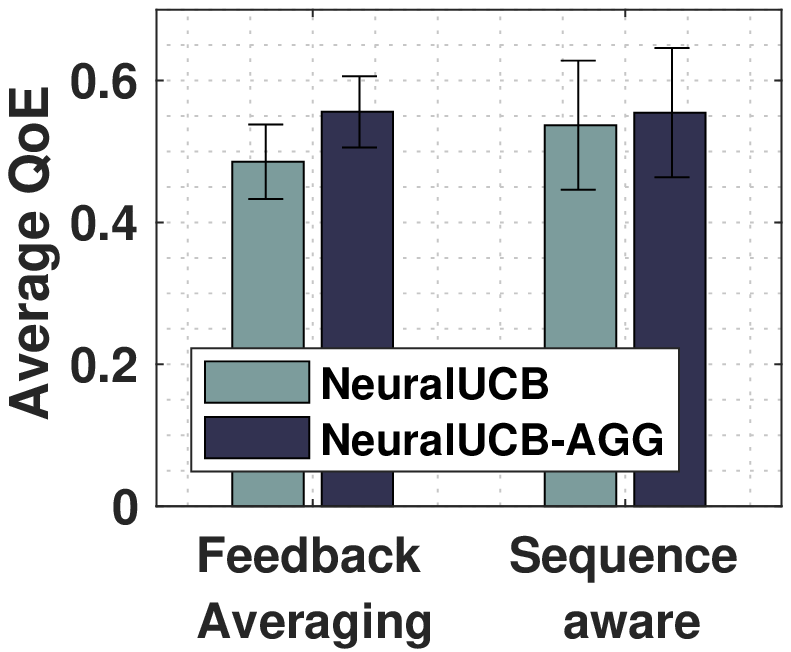}
		\label{fig:fra}
	}
	\subfigure[Mixed QoE feedback.]
	{
		\includegraphics[width=0.45\linewidth]{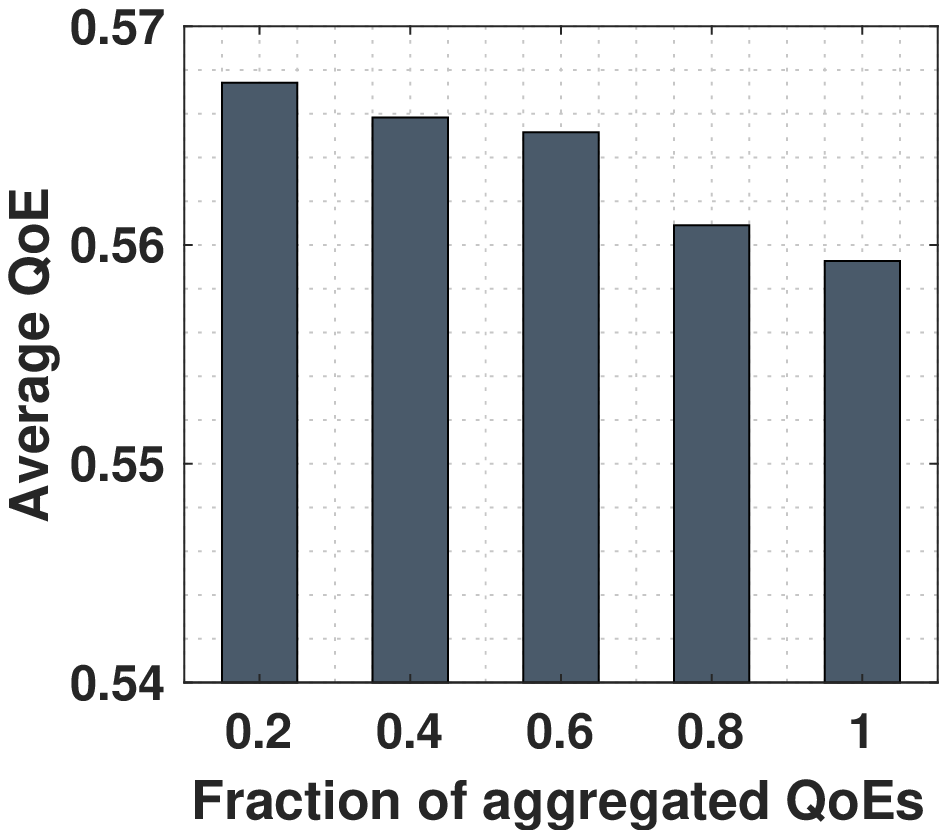}
		\label{fig:mix}
	}
	\caption{Performance of NeuralUCB-AGG.}
	\label{fig:fra_all}
	\vspace{-0.2 in}
\end{figure}

Fig. \ref{fig:fra} compares the performance of NeuralUCB and NeuralUCB-AGG (NeuralUCB with the feedback refinement approach) on aggregated QoEs. Standard NeuralUCB regards the aggregated QoE as the QoE of the last selected DNN in a session. In general, we see that NeuralUCB-AGG delivers higher QoE on aggregated QoEs. We can see that the improvement provided by NeuralUCB-AGG is smaller with sequence-aware aggregation. This is because aggregated QoEs generated by sequence-aware aggregation are dominated by the QoE of the last selected DNN, and hence standard NeuralUCB can still work well. Fig. \ref{fig:mix} shows the performance of NeuralUCB-AGG with mixed feedback where solicited QoEs contain both aggregated QoEs (generated by feedback averaging) and non-aggregated QoEs. We see that NeuralUCB-AGG achieves higher user QoE when the fraction of aggregated QoEs is low. This is because non-aggregated QoEs help estimate individualized QoE in the feedback refinement approach.

\subsection{Experiments on Real-world Dataset}
We also collect context and QoE data from real-world human users to evaluate the proposed method. In this part, we focus on the customization performance and show how our method adapts DNN selection decisions to different user preferences and usage scenarios.

\textbf{User QoE Collection:} We take smartphones as examples of IoT devices. Our data is collected from 5 human subjects on two device: Motorola X$^4$ and OnePlus One. Motorola X$^4$ is equipped with a Qualcomm Snapdragon 625 processor, an octa-core CPU, an Adreno 508 GPU, and a 3GB RAM; OnePlus One is equipped with a Qualcomm Snapdragon 801 processor, a quad-core CPU, an Adreno 330 GPU, and a 3GB RAM. Both smartphones run on the Android 8.1.0 system. We install a DL-based image classification application \cite{tensorflowmodels} on these smartphones. It classifies images captured by the device camera in real-time and displays top-3 classification results. We use a 7-dimension context: the ambient brightness, location of the device, time of the day, battery level, CPU temperature, nominal accuracy of DNNs, nominal delay of DNNs. At the beginning of each session, the subjects record the context. Then a DNN (MobileNet-v2, Inception-v2, or Inception-v3) is randomly picked to configure the application. The subjects are requested to use the application to classify objects in the surrounding environment during the session (around 2 minutes). At the end of the session, subjects provide their QoE ratings. The context, selection DNN, and QoE rating are stored in a dataset. We collect 200 data samples for each subject. However, we cannot directly run our method on the collected data due to the lack of counterfactual QoE data. Specifically, each collected data sample records the context information and QoE for a used DNN. When we feed the context of a data sample into OIC, the DNN selected by our method can be different from the DNN we used for collecting the user QoE. As a result, we cannot obtain the ground-truth user QoE for the selected DNN. In this experiment, we utilize the counterfactual prediction method in \cite{hartford2017deep} to generate the counterfactual QoE data.

\begin{figure*}[htb]
	\centering
	\includegraphics[width = 1\linewidth]{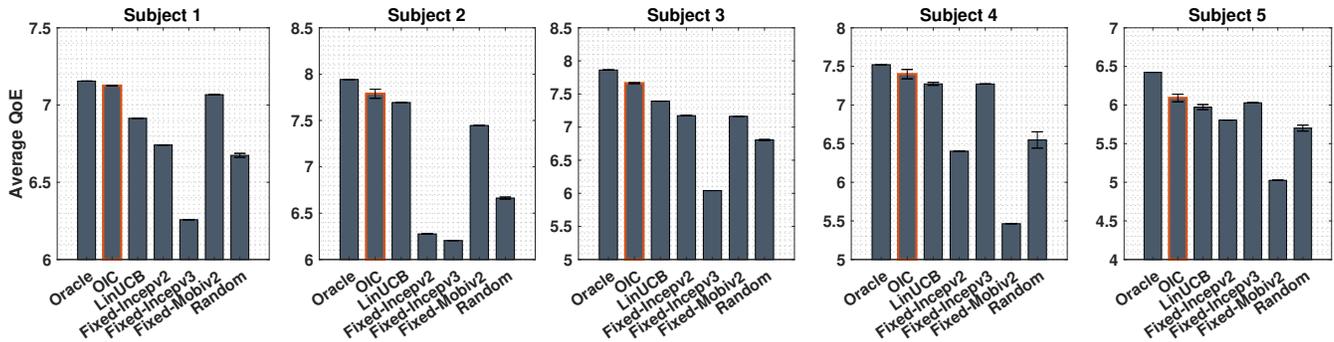}
	\captionof{figure}{Average QoE of subjects.}
	\label{fig:ave_qoe_subject}
\end{figure*}

\subsubsection{Comparison on Average QoE}
Fig. \ref{fig:ave_qoe_subject} shows (session-)average QoE of 5 subjects when using OIC and other benchmarks. Similar to that in the numerical experiment, OIC outperforms other benchmarks and achieves a close-to-oracle performance. In some cases, the performance of Fixed-DNN is comparable to OIC, e.g., Fixed-Mobiv2 for user 1 and Fix-Incepv3 for user 4. However, we can see that fixing a certain DNN cannot deliver high QoE to all subjects and hence on-thing inference customization is very necessary. 

\begin{figure*}[htb]
	\centering
	\includegraphics[width = 1\linewidth]{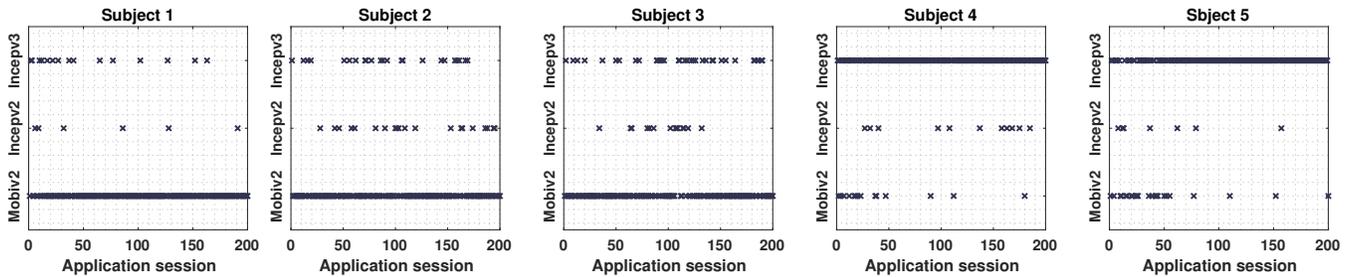}
	\captionof{figure}{Selected DNNs for subjects.}
	\label{fig:user_action}
\end{figure*}
\begin{figure*}[htb]
	\centering
	\includegraphics[width=1\linewidth]{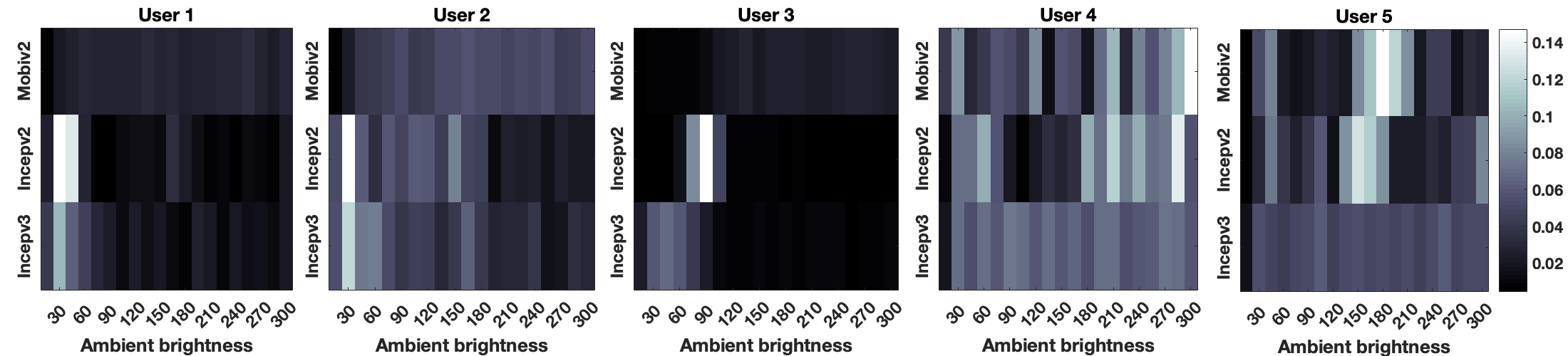}
	\caption{Impact of ambient brightness on DNN selection.}
	\label{fig:ambient_context}
\end{figure*}

\subsubsection{Heterogeneity of User Preference}
Fig. \ref{fig:user_action} shows selected DNNs for 5 subjects in 200 sessions. For ease of interpretation, we give the nominal performance of three DNNs: Mobiv2 (accuracy 70.8\%, inference delay 12ms), Incepv2 (accuracy 73.5\%, inference delay 59ms), Incepv3 (accuracy 77.5\%, inference delay 148ms). It can be observed that Mobiv2 is often selected for the subject 1, 2, and 3, meaning that these users prefer low inference delay, and Incepv3 is often selected for the subject 4 and 5, meaning that these users prefer high inference accuracy. We see that OIC is able to adapt its DNN selection scheme to different user preferences.

\subsubsection{Adaption to Environment Changes}
We next take ambient brightness as an example to show the impact of context on the DNN selection. Fig. \ref{fig:ambient_context} shows the fraction that a DNN is selected under different ambient brightness. For user 1, 2, and 3 (prefer low latency), we can see that when the ambient brightness is relatively low (below 100), Incepv2 and Incepv3 are more likely to be selected. This is because low brightness causes accuracy degradation, and in this case, the users may trade low inference delay (i.e., selecting Mobiv2) for better inference accuracy (i.e., selecting Incepv2 and Incepv3). For user 5, we see that Mobiv2 and Incepv2 are more likely to be selected when the brightness is appropriate (between 120-200). This is because all three DNNs can achieve high accuracy when the brightness is appropriate, and hence DNNs with low inference delay are preferred. These results indicate that OIC can adapt to environmental changes.

\section{Conclusion} \label{sec:conclusion}
In this paper, we designed automated on-thing inference customization for DL-based applications. Our goal is to improve the user-perceived experience by selecting the best-fit DNN for configuring the application. Two main designing topics, QoE prediction and feedback solicitation scheme, are investigated. A novel multi-armed bandit algorithm called NeuralUCB was utilized to learn a QoE predictor. NeuralUCB exhibits excellent generalization ability for handling heterogeneous user QoE patterns. We also showed via experiments that the space and time complexity of NeuralUCB is acceptable to resource-constrained devices. The feedback solicitation scheme (FSS) was further studied to mitigate the solicitation cost during online learning. With FSS, we dramatically reduced the number of QoE solicitations required by NeuralUCB without harming its asymptotic optimality. The proposed framework can be applied to many other multi-armed bandit learning problems where online feedback collections cause non-negligible cost.          

\bibliographystyle{IEEEtran}
\bibliography{references}

\begin{thebibliography}{10}
\providecommand{\url}[1]{#1}
\csname url@samestyle\endcsname
\providecommand{\newblock}{\relax}
\providecommand{\bibinfo}[2]{#2}
\providecommand{\BIBentrySTDinterwordspacing}{\spaceskip=0pt\relax}
\providecommand{\BIBentryALTinterwordstretchfactor}{4}
\providecommand{\BIBentryALTinterwordspacing}{\spaceskip=\fontdimen2\font plus
\BIBentryALTinterwordstretchfactor\fontdimen3\font minus
  \fontdimen4\font\relax}
\providecommand{\BIBforeignlanguage}[2]{{%
\expandafter\ifx\csname l@#1\endcsname\relax
\typeout{** WARNING: IEEEtran.bst: No hyphenation pattern has been}%
\typeout{** loaded for the language `#1'. Using the pattern for}%
\typeout{** the default language instead.}%
\else
\language=\csname l@#1\endcsname
\fi
#2}}
\providecommand{\BIBdecl}{\relax}
\BIBdecl

\bibitem{zhang2019deep}
C.~Zhang, P.~Patras, and H.~Haddadi, ``Deep learning in mobile and wireless
  networking: A survey,'' \emph{IEEE Communications Surveys \& Tutorials},
  vol.~21, no.~3, pp. 2224--2287, 2019.

\bibitem{deng2014deep}
L.~Deng and D.~Yu, ``Deep learning: methods and applications,''
  \emph{Foundations and trends in signal processing}, vol.~7, no. 3--4, pp.
  197--387, 2014.

\bibitem{eshratifar2019jointdnn}
A.~E. Eshratifar, M.~S. Abrishami, and M.~Pedram, ``Jointdnn: an efficient
  training and inference engine for intelligent mobile cloud computing
  services,'' \emph{IEEE Transactions on Mobile Computing}, 2019.

\bibitem{li2017multi}
P.~Li, J.~Li, Z.~Huang, T.~Li, C.-Z. Gao, S.-M. Yiu, and K.~Chen, ``Multi-key
  privacy-preserving deep learning in cloud computing,'' \emph{Future
  Generation Computer Systems}, vol.~74, pp. 76--85, 2017.

\bibitem{ran2018deepdecision}
X.~Ran, H.~Chen, X.~Zhu, Z.~Liu, and J.~Chen, ``Deepdecision: A mobile deep
  learning framework for edge video analytics,'' in \emph{IEEE INFOCOM
  2018-IEEE Conference on Computer Communications}.\hskip 1em plus 0.5em minus
  0.4em\relax IEEE, 2018, pp. 1421--1429.

\bibitem{freire2019deep}
D.~Freire-Obreg{\'o}n, F.~Narducci, S.~Barra, and M.~Castrill{\'o}n-Santana,
  ``Deep learning for source camera identification on mobile devices,''
  \emph{Pattern Recognition Letters}, vol. 126, pp. 86--91, 2019.

\bibitem{lane2015can}
N.~D. Lane and P.~Georgiev, ``Can deep learning revolutionize mobile sensing?''
  in \emph{Proceedings of the 16th International Workshop on Mobile Computing
  Systems and Applications}, 2015, pp. 117--122.

\bibitem{wu2019machine}
C.-J. Wu, D.~Brooks, K.~Chen, D.~Chen, S.~Choudhury, M.~Dukhan, K.~Hazelwood,
  E.~Isaac, Y.~Jia, B.~Jia \emph{et~al.}, ``Machine learning at facebook:
  Understanding inference at the edge,'' in \emph{2019 IEEE International
  Symposium on High Performance Computer Architecture (HPCA)}.\hskip 1em plus
  0.5em minus 0.4em\relax IEEE, 2019, pp. 331--344.

\bibitem{mao2017mobile}
Y.~Mao, C.~You, J.~Zhang, K.~Huang, and K.~B. Letaief, ``Mobile edge computing:
  Survey and research outlook,'' \emph{arXiv preprint arXiv:1701.01090}, 2017.

\bibitem{sima2018apple}
D.~Sima, ``Apple’s mobile processors,'' 2018.

\bibitem{lite2017android}
T.~Lite, ``Android to launch tensorflow lite for mobile machine learning,''
  2017.

\bibitem{coreML}
\BIBentryALTinterwordspacing
A.~Inc. Core ml: Integrate machine learning models into your app. [Online].
  Available: \url{https://developer.apple.com/documentation/coreml}
\BIBentrySTDinterwordspacing

\bibitem{xie2019source}
X.~Xie and K.-H. Kim, ``Source compression with bounded dnn perception loss for
  iot edge computer vision,'' in \emph{The 25th Annual International Conference
  on Mobile Computing and Networking}, 2019, pp. 1--16.

\bibitem{zhang2018systematic}
T.~Zhang, S.~Ye, K.~Zhang, J.~Tang, W.~Wen, M.~Fardad, and Y.~Wang, ``A
  systematic dnn weight pruning framework using alternating direction method of
  multipliers,'' in \emph{Proceedings of the European Conference on Computer
  Vision (ECCV)}, 2018, pp. 184--199.

\bibitem{wang2019private}
J.~Wang, W.~Bao, L.~Sun, X.~Zhu, B.~Cao, and S.~Y. Philip, ``Private model
  compression via knowledge distillation,'' in \emph{Proceedings of the AAAI
  Conference on Artificial Intelligence}, vol.~33, 2019, pp. 1190--1197.

\bibitem{taylor2018adaptive}
B.~Taylor, V.~S. Marco, W.~Wolff, Y.~Elkhatib, and Z.~Wang, ``Adaptive deep
  learning model selection on embedded systems,'' \emph{ACM SIGPLAN Notices},
  vol.~53, no.~6, pp. 31--43, 2018.

\bibitem{park2015big}
E.~Park, D.~Kim, S.~Kim, Y.-D. Kim, G.~Kim, S.~Yoon, and S.~Yoo, ``Big/little
  deep neural network for ultra low power inference,'' in \emph{Proceedings of
  the 10th International Conference on Hardware/Software Codesign and System
  Synthesis}.\hskip 1em plus 0.5em minus 0.4em\relax IEEE Press, 2015, pp.
  124--132.

\bibitem{cai2017neuralpower}
E.~Cai, D.-C. Juan, D.~Stamoulis, and D.~Marculescu, ``Neuralpower: Predict and
  deploy energy-efficient convolutional neural networks,'' \emph{arXiv preprint
  arXiv:1710.05420}, 2017.

\bibitem{rouhani2016delight}
B.~D. Rouhani, A.~Mirhoseini, and F.~Koushanfar, ``Delight: Adding energy
  dimension to deep neural networks,'' in \emph{Proceedings of the 2016
  International Symposium on Low Power Electronics and Design}, 2016, pp.
  112--117.

\bibitem{chen2014qos}
Y.~Chen, K.~Wu, and Q.~Zhang, ``From qos to qoe: A tutorial on video quality
  assessment,'' \emph{IEEE Communications Surveys \& Tutorials}, vol.~17,
  no.~2, pp. 1126--1165, 2014.

\bibitem{barakovic2013survey}
S.~Barakovi{\'c} and L.~Skorin-Kapov, ``Survey and challenges of qoe management
  issues in wireless networks,'' \emph{Journal of Computer Networks and
  Communications}, vol. 2013, 2013.

\bibitem{edgetpu}
\BIBentryALTinterwordspacing
G.~Cloud. (2019) Edge tpu - run inference at the edge. [Online]. Available:
  \url{https://cloud.google.com/edge-tpu}
\BIBentrySTDinterwordspacing

\bibitem{howard2017mobilenets}
A.~G. Howard, M.~Zhu, B.~Chen, D.~Kalenichenko, W.~Wang, T.~Weyand,
  M.~Andreetto, and H.~Adam, ``Mobilenets: Efficient convolutional neural
  networks for mobile vision applications,'' \emph{arXiv preprint
  arXiv:1704.04861}, 2017.

\bibitem{szegedy2017inception}
C.~Szegedy, S.~Ioffe, V.~Vanhoucke, and A.~A. Alemi, ``Inception-v4,
  inception-resnet and the impact of residual connections on learning,'' in
  \emph{Thirty-first AAAI conference on artificial intelligence}, 2017.

\bibitem{zoph2018learning}
B.~Zoph, V.~Vasudevan, J.~Shlens, and Q.~V. Le, ``Learning transferable
  architectures for scalable image recognition,'' in \emph{Proceedings of the
  IEEE conference on computer vision and pattern recognition}, 2018, pp.
  8697--8710.

\bibitem{redmon2016you}
J.~Redmon, S.~Divvala, R.~Girshick, and A.~Farhadi, ``You only look once:
  Unified, real-time object detection,'' in \emph{Proceedings of the IEEE
  conference on computer vision and pattern recognition}, 2016, pp. 779--788.

\bibitem{cheng2018model}
Y.~Cheng, D.~Wang, P.~Zhou, and T.~Zhang, ``Model compression and acceleration
  for deep neural networks: The principles, progress, and challenges,''
  \emph{IEEE Signal Processing Magazine}, vol.~35, no.~1, pp. 126--136, 2018.

\bibitem{han2015deep}
S.~Han, H.~Mao, and W.~J. Dally, ``Deep compression: Compressing deep neural
  networks with pruning, trained quantization and huffman coding,'' \emph{arXiv
  preprint arXiv:1510.00149}, 2015.

\bibitem{huang2018data}
Z.~Huang and N.~Wang, ``Data-driven sparse structure selection for deep neural
  networks,'' in \emph{Proceedings of the European conference on computer
  vision (ECCV)}, 2018, pp. 304--320.

\bibitem{tensorflowmodels}
\BIBentryALTinterwordspacing
T.~Lite. (2020) Hosted dnn models. [Online]. Available:
  \url{https://www.tensorflow.org}
\BIBentrySTDinterwordspacing

\bibitem{windrim2016unsupervised}
L.~Windrim, A.~Melkumyan, R.~Murphy, A.~Chlingaryan, and J.~Nieto,
  ``Unsupervised feature learning for illumination robustness,'' in \emph{2016
  IEEE International Conference on Image Processing (ICIP)}.\hskip 1em plus
  0.5em minus 0.4em\relax IEEE, 2016, pp. 4453--4457.

\bibitem{carson2007incentive}
R.~T. Carson and T.~Groves, ``Incentive and informational properties of
  preference questions,'' \emph{Environmental and resource economics}, vol.~37,
  no.~1, pp. 181--210, 2007.

\bibitem{zhou2019neural}
D.~Zhou, L.~Li, and Q.~Gu, ``Neural contextual bandits with upper confidence
  bound-based exploration,'' \emph{arXiv preprint arXiv:1911.04462}, 2019.

\bibitem{han2017ese}
S.~Han, J.~Kang, H.~Mao, Y.~Hu, X.~Li, Y.~Li, D.~Xie, H.~Luo, S.~Yao, Y.~Wang
  \emph{et~al.}, ``Ese: Efficient speech recognition engine with sparse lstm on
  fpga,'' in \emph{Proceedings of the 2017 ACM/SIGDA International Symposium on
  Field-Programmable Gate Arrays}, 2017, pp. 75--84.

\bibitem{stamoulis2018designing}
D.~Stamoulis, T.-W.~R. Chin, A.~K. Prakash, H.~Fang, S.~Sajja, M.~Bognar, and
  D.~Marculescu, ``Designing adaptive neural networks for energy-constrained
  image classification,'' in \emph{Proceedings of the International Conference
  on Computer-Aided Design}.\hskip 1em plus 0.5em minus 0.4em\relax ACM, 2018,
  p.~23.

\bibitem{lu2019automating}
B.~Lu, J.~Yang, L.~Y. Chen, and S.~Ren, ``Automating deep neural network model
  selection for edge inference,'' in \emph{2019 IEEE First International
  Conference on Cognitive Machine Intelligence (CogMI)}.\hskip 1em plus 0.5em
  minus 0.4em\relax IEEE, 2019, pp. 184--193.

\bibitem{langford2007epoch}
J.~Langford and T.~Zhang, ``The epoch-greedy algorithm for contextual
  multi-armed bandits,'' in \emph{Proceedings of the 20th International
  Conference on Neural Information Processing Systems}.\hskip 1em plus 0.5em
  minus 0.4em\relax Citeseer, 2007, pp. 817--824.

\bibitem{bubeck2012regret}
S.~Bubeck and N.~Cesa-Bianchi, ``Regret analysis of stochastic and
  nonstochastic multi-armed bandit problems,'' \emph{arXiv preprint
  arXiv:1204.5721}, 2012.

\bibitem{rusmevichientong2010linearly}
P.~Rusmevichientong and J.~N. Tsitsiklis, ``Linearly parameterized bandits,''
  \emph{Mathematics of Operations Research}, vol.~35, no.~2, pp. 395--411,
  2010.

\bibitem{dani2008stochastic}
V.~Dani, T.~P. Hayes, and S.~M. Kakade, ``Stochastic linear optimization under
  bandit feedback,'' 2008.

\bibitem{abe2003reinforcement}
N.~Abe, A.~W. Biermann, and P.~M. Long, ``Reinforcement learning with immediate
  rewards and linear hypotheses,'' \emph{Algorithmica}, vol.~37, no.~4, pp.
  263--293, 2003.

\bibitem{auer2002using}
P.~Auer, ``Using confidence bounds for exploitation-exploration trade-offs,''
  \emph{Journal of Machine Learning Research}, vol.~3, no. Nov, pp. 397--422,
  2002.

\bibitem{filippi2010parametric}
S.~Filippi, O.~Cappe, A.~Garivier, and C.~Szepesv{\'a}ri, ``Parametric bandits:
  The generalized linear case,'' in \emph{Advances in Neural Information
  Processing Systems}, 2010, pp. 586--594.

\bibitem{bubeck2011x}
S.~Bubeck, R.~Munos, G.~Stoltz, and C.~Szepesv{\'a}ri, ``X-armed bandits,''
  \emph{Journal of Machine Learning Research}, vol.~12, no. May, pp.
  1655--1695, 2011.

\bibitem{valko2013finite}
M.~Valko, N.~Korda, R.~Munos, I.~Flaounas, and N.~Cristianini, ``Finite-time
  analysis of kernelised contextual bandits,'' \emph{arXiv preprint
  arXiv:1309.6869}, 2013.

\bibitem{srinivas2009gaussian}
N.~Srinivas, A.~Krause, S.~M. Kakade, and M.~Seeger, ``Gaussian process
  optimization in the bandit setting: No regret and experimental design,''
  \emph{arXiv preprint arXiv:0912.3995}, 2009.

\bibitem{teerapittayanon2016branchynet}
S.~Teerapittayanon, B.~McDanel, and H.-T. Kung, ``Branchynet: Fast inference
  via early exiting from deep neural networks,'' in \emph{2016 23rd
  International Conference on Pattern Recognition (ICPR)}.\hskip 1em plus 0.5em
  minus 0.4em\relax IEEE, 2016, pp. 2464--2469.

\bibitem{konidaris2006autonomous}
G.~Konidaris and A.~Barto, ``Autonomous shaping: Knowledge transfer in
  reinforcement learning,'' in \emph{Proceedings of the 23rd international
  conference on Machine learning}, 2006, pp. 489--496.

\bibitem{torrey2010transfer}
L.~Torrey and J.~Shavlik, ``Transfer learning,'' in \emph{Handbook of research
  on machine learning applications and trends: algorithms, methods, and
  techniques}.\hskip 1em plus 0.5em minus 0.4em\relax IGI global, 2010, pp.
  242--264.

\bibitem{bhowmik2019estimagg}
A.~Bhowmik, M.~Chen, Z.~Xing, and S.~Rajan, ``Estimagg: A learning framework
  for groupwise aggregated data,'' in \emph{Proceedings of the 2019 SIAM
  International Conference on Data Mining}.\hskip 1em plus 0.5em minus
  0.4em\relax SIAM, 2019, pp. 477--485.

\bibitem{ILSVRC}
\BIBentryALTinterwordspacing
(2017) Large scale visual recognition challenge (ilsvrc). [Online]. Available:
  \url{http://www.image-net.org/challenges/LSVRC/}
\BIBentrySTDinterwordspacing

\bibitem{li2010contextual}
L.~Li, W.~Chu, J.~Langford, and R.~E. Schapire, ``A contextual-bandit approach
  to personalized news article recommendation,'' in \emph{Proceedings of the
  19th international conference on World wide web}, 2010, pp. 661--670.

\bibitem{hartford2017deep}
J.~Hartford, G.~Lewis, K.~Leyton-Brown, and M.~Taddy, ``Deep iv: A flexible
  approach for counterfactual prediction,'' in \emph{International Conference
  on Machine Learning}, 2017, pp. 1414--1423.

\end{thebibliography}

\vskip 0pt plus -1fil
\begin{IEEEbiography}[{\includegraphics[width=1in,height=1.25in,clip,keepaspectratio]{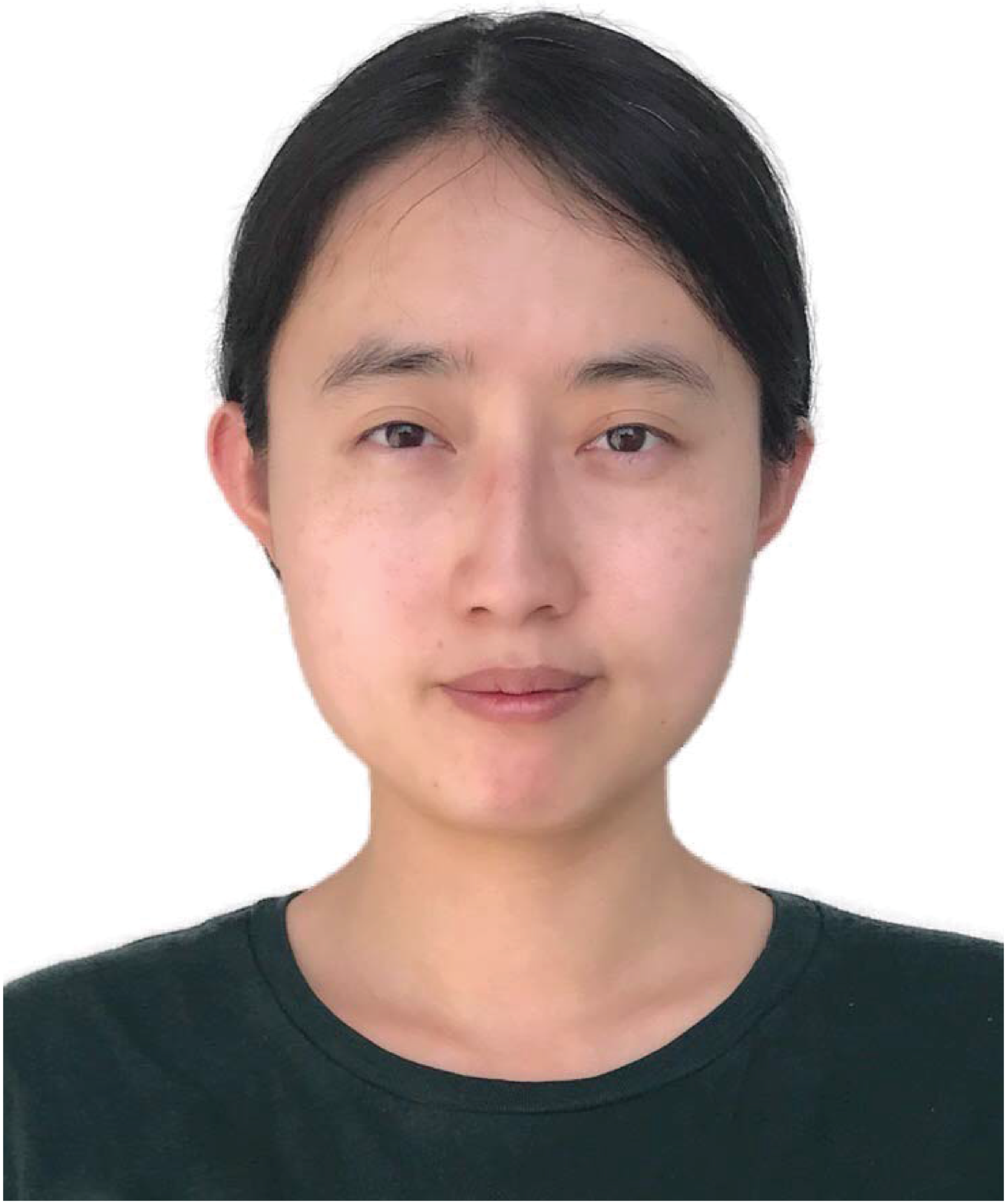}}]{Yang Bai} received her Ph.D. degree in the College of Engineering, University of Miami, USA, in 2021. She received the B.S. degree from Northeastern University, Shenyang, China, and the M.S. degree from the College of Engineering, University of Miami. Her primary research interests include machine learning, computer vision, and game theory.
\end{IEEEbiography}

\vskip 0pt plus -1fil

\begin{IEEEbiography}[{\includegraphics[width=1in,height=1.25in,clip,keepaspectratio]{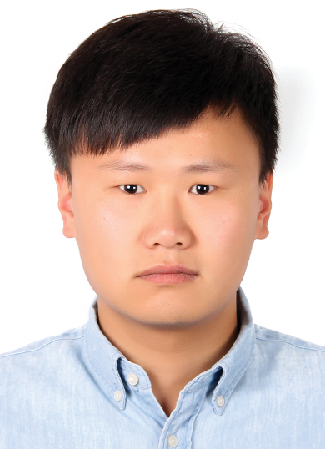}}]{Lixing Chen} is a tenure-track assistant professor in Institute of Cyber Science and Technology at Shanghai Jiao Tong University, China. He received the Ph.D. degree in Electrical and Computer Engineering from the University of Miami in 2020, and the BS and ME Degrees from the College of Information and Control Engineering, China University of Petroleum, Qingdao, China, in 2013 and 2016, respectively. His primary research interests include mobile edge computing and machine learning for networks. 
\end{IEEEbiography}

\vskip 0pt plus -1fil

\begin{IEEEbiography}[{\includegraphics[width=1in,height=1.25in,clip,keepaspectratio]{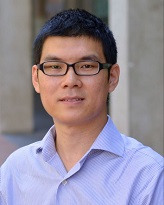}}]{Shaolei Ren} (Senior Member, IEEE) received the BE degree from Tsinghua University, in 2006, the MPhil degree from Hong Kong University of Science and Technology, in 2008, and the PhD degree from the University of California, Los Angeles, in 2012, all in electrical and computer engineering. He is an assistant professor of electrical and computer engineering with the University of California, Riverside. His research interests include cloud computing, data centers, and network economics. He was a recipient of the U.S. NSF Faculty Early Career Development (CAREER) Award in 2015.
	
\end{IEEEbiography}

\vskip 0pt plus -1fil

\begin{IEEEbiography}[{\includegraphics[width=1in,height=1.25in,clip,keepaspectratio]{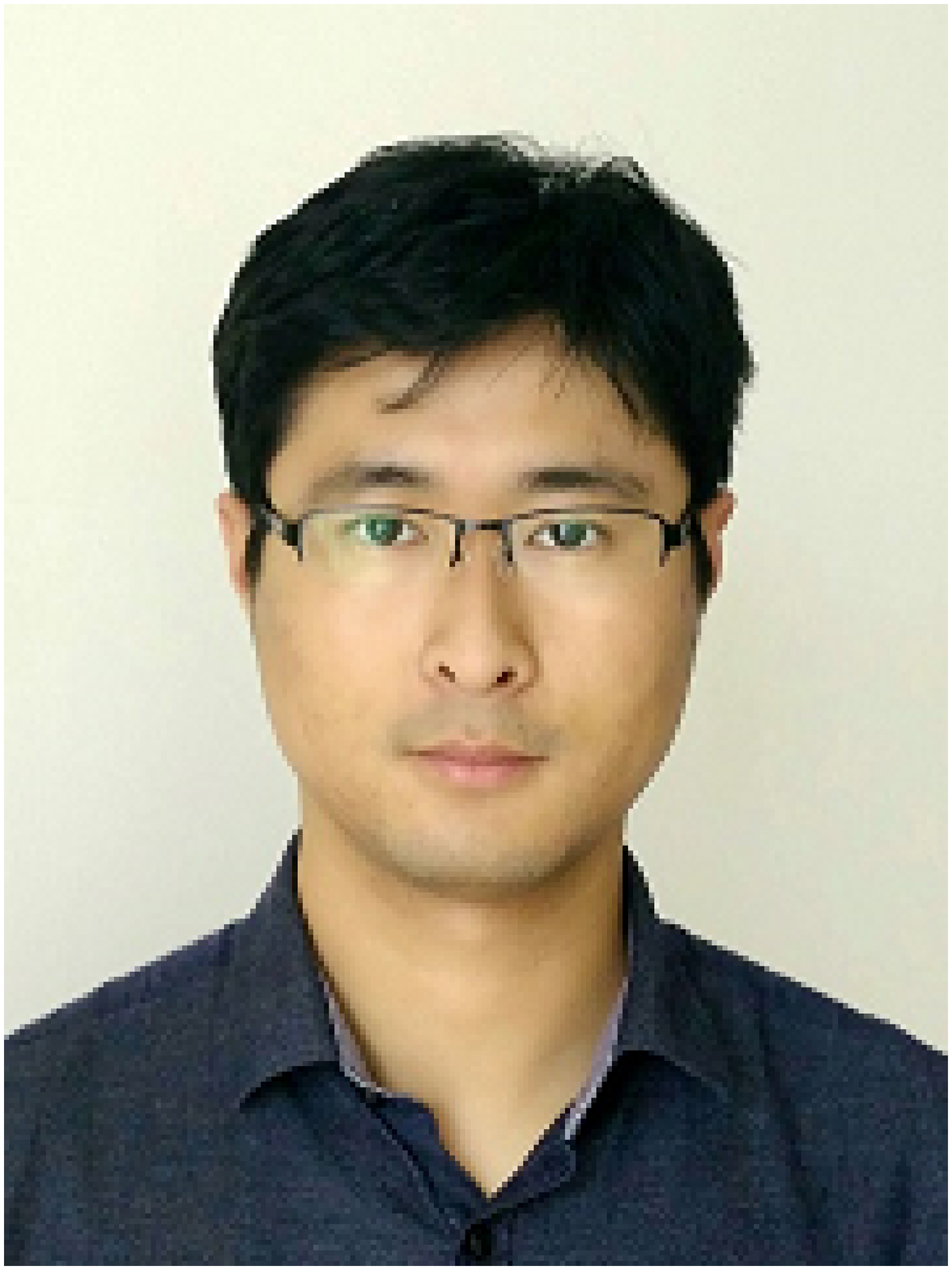}}]{Jie Xu} (Senior Member, IEEE) received the B.S. and M.S. degrees in electronic engineering from Tsinghua University, Beijing, China, in 2008 and 2010, respectively, and the Ph.D. degree in electrical engineering from UCLA in 2015. He is currently an Associate Professor with the Department of Electrical and Computer Engineering, University of Miami. His research interests include mobile edge computing/intelligence, machine learning for networks, and network security. He received the NSF CAREER Award in 2021. 
\end{IEEEbiography}

\clearpage
\appendices
\section{Proof of Theorem \ref{theo:bound_m_regret}} \label{proof:theo:bound_m_regret}
\begin{proof}
	Using FSS, a QoE feedback is solicited every $B = \lfloor T^{1/3}\rfloor$ application sessions. Let $\mathcal{L} = \{1,2,\dots,L\}$, where $L = \min(\{l\in\mathbb{Z}^+~|~l \cdot B > T\})$, be the index of solicitations. $l$-th QoE solicitation happens in session $t = (l-1)B + 1$. Therefore, we can rewrite the m-regret as: 
	\begin{equation}\label{eq:regret_slotted}
	\begin{split}
	\tilde{R}_T \leq & \mathbb{E}\left[\sum\nolimits_{l=1}^L \sum\nolimits_{t = (l-1)B + 1}^{lB}\tilde{r}_{t,m_t^*}-\tilde{r}_{t,m_t}\right]\\
	= &\mathbb{E}\left[\sum\nolimits_{l=1}^L \sum\nolimits_{t = (l-1)B+1}^{lB}r_{t,m_t^*}-r_{t,m_t}\right] + \lambda|\mathcal{L}|
	\end{split}
	\end{equation}
	
	The term $\mathcal{L}$ in the last equation is the solicitation cost and its value equals $\lambda L$. To bound the learning regret of NeuralUCB with the designed feedback solicitation scheme, we state a simplified version of Lemma 5.3 \cite{zhou2019neural}.
	\begin{lemma}\label{lemma:neuralUCB_perslot}
		For a DNN $m_t$ selected by NeuralUCB in session $t$, with probability $1-\delta$, we have 
		$r_{t,m^*_t} - r_{t,m_t} \leq \mathcal{Q}(x_{t,m_t},l,\delta)$, where $\mathcal{Q}(x_{t,m_t},l,\delta) :=\gamma_{l} \min\left\{\|g(x_{t,m_t};\theta_{l})/\sqrt{m(\delta)}\|_{\bm{Z}^{-1}_{l}},1\right\} + \mathcal{G}(T,\delta)$. The parameter $l$ is the amount of collected QoE data up to session $t$, $m(\delta)$ is a constant determined by $\delta$, and $\mathcal{G}(T,\delta)$ is a function of $T$ and $\delta$.
	\end{lemma}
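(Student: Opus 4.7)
The plan is to follow the NeuralUCB analysis of Zhou et al.\ \cite{zhou2019neural}, of which this lemma is an explicit simplification, with the only adaptation being that we index the confidence set by $l$, the number of solicited feedbacks accumulated up to session $t$, rather than by $t$ itself (since FSS skips sessions). The argument decomposes into three stages: (i) linearize the over-parameterized QPN around its random initialization using neural tangent kernel (NTK) arguments; (ii) establish a self-normalized confidence bound for the resulting approximately linear ridge estimator; and (iii) chain the bound through the UCB selection rule $m_t = \argmax_m u_{t,m}$ to convert pointwise deviation bounds into an instantaneous regret bound.

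For stage (i), I would invoke the standard NTK result: for QPN width $h$ exceeding a polynomial function of $T$, $1/\delta$, depth, and the smallest eigenvalue of the NTK matrix, the iterates $\bm{\theta}_l$ produced by \texttt{TrainQPN} remain within an $O(1/\sqrt{h})$ neighborhood of the initialization $\bm{\theta}_0$. In that neighborhood the network admits a first-order expansion $\hat{r}(x;\bm{\theta}_l) \approx \hat{r}(x;\bm{\theta}_0) + g(x;\bm{\theta}_0)^\top(\bm{\theta}_l - \bm{\theta}_0)$, and $g(x;\bm{\theta}_l)$ stays uniformly close to $g(x;\bm{\theta}_0)$. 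All residuals from these approximations are deterministic, vanish as $h \to \infty$, and are bundled into the additive term $\mathcal{G}(T,\delta)$.

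For stage (ii), in the linearized feature space with features $\phi(x) := g(x;\bm{\theta}_l)/\sqrt{m(\delta)}$ (the scaling $\sqrt{m(\delta)}$ being the natural normalization that makes $\phi(x)$ approximately unit-norm under the NTK initialization), the gradient-descent solution $\bm{\theta}_l$ approximates the ridge estimator whose normal equations have Gram matrix $\bm{Z}_l$. Treating the QoE noise $r_\tau - \mathbb{E}[r_\tau \mid x_\tau]$ as conditionally sub-Gaussian, the self-normalized martingale concentration inequality of Abbasi-Yadkori et al.\ yields, with probability at least $1 - \delta$ uniformly over $x$,
\begin{equation*}
|\hat{r}(x;\bm{\theta}_l) - r(x)| \leq \gamma_l \|\phi(x)\|_{\bm{Z}_l^{-1}} + \mathcal{G}(T,\delta),
\end{equation*}
where $\gamma_l$ is the confidence radius and $\mathcal{G}(T,\delta)$ absorbs the linearization residuals. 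Since the reward is bounded, the uncertainty term can be truncated to $\min\{\|\phi(x)\|_{\bm{Z}_l^{-1}},1\}$ without loss.

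For stage (iii), I apply the standard optimism-under-uncertainty argument. On the confidence event above, $r_{t,m^*_t} \leq \hat{r}(x_{t,m^*_t};\bm{\theta}_l) + \gamma_l\min\{\|\phi(x_{t,m^*_t})\|_{\bm{Z}_l^{-1}},1\} + \mathcal{G}(T,\delta) = u_{t,m^*_t} + \mathcal{G}(T,\delta)$, and the greedy rule gives $u_{t,m^*_t} \leq u_{t,m_t}$, so
\begin{equation*}
r_{t,m^*_t} - r_{t,m_t} \leq u_{t,m_t} - r_{t,m_t} + \mathcal{G}(T,\delta);
\end{equation*}
applying the confidence bound a second time to $\hat{r}(x_{t,m_t};\bm{\theta}_l) - r_{t,m_t}$ and folding the resulting factor of two into $\gamma_l$ and $\mathcal{G}(T,\delta)$ recovers $\mathcal{Q}(x_{t,m_t},l,\delta)$. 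The main technical obstacle is stage (i): carefully tracking the NTK linearization error so that it can be subsumed into $\mathcal{G}(T,\delta)$ while keeping $\gamma_l$ of the polynomial-in-$l$ order used in \cite{zhou2019neural}. Because this analysis is performed in \cite{zhou2019neural} and our modifications (intermittent solicitation) affect only the bookkeeping index $l$ and not the per-sample estimation guarantee, the NTK machinery can be imported essentially verbatim with $l$ replacing $t$.
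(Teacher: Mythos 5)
Your proposal is correct and takes the same route as the paper, which in fact offers no proof of this lemma at all: it is stated verbatim as ``a simplified version of Lemma 5.3'' of \cite{zhou2019neural}, so the entire burden is delegated to that reference. Your three-stage reconstruction (NTK linearization into $\mathcal{G}(T,\delta)$, the self-normalized confidence bound with the width normalization $\sqrt{m(\delta)}$, and the optimism argument that folds the factor of two into $\gamma_l$ and $\mathcal{G}$) is a faithful account of how that cited lemma is actually proved, and you correctly identify that the only adaptation needed here is re-indexing the confidence set by the number of solicited feedbacks $l$ rather than by the session index $t$.
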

	Letting $t_l = \argmax\limits_{l\leq t \leq l+ B -1} \mathcal{Q}(x_{t,m_t},l,\delta)$ and using Lemma \ref{lemma:neuralUCB_perslot}, we can write the learning regret in \eqref{eq:regret_slotted} as:
	\begin{align*}
	&\sum\nolimits_{l=1}^{L} \sum\nolimits_{t =  (l-1)B+1}^{lB} r_{t,m_t^*}-r_{t,m_t}
	\\ \leq & \sum\nolimits_{l=1}^L \sum\nolimits_{t = (l-1)B+1}^{lB} \mathcal{Q}(x_{t,m_t},l,\delta) \\
	\leq & B\sum\nolimits_{l=1}^L \mathcal{Q}(x_{t_l,m_{t_l}},l,\delta)
	\end{align*}
	Consider the upper regret bound $\mathcal{O}(\sqrt{T})$ of standard NeuralUCB in Lemma \ref{lemma:neuralucb_regret_bound} where the QoE is solicited every session $t$, we have  
	\begin{align*}
	\sum\nolimits_{t=1} ^T r_{t,m^*_t} - r_{t,m_t} \leq \sum\nolimits^T_{t=1} \mathcal{Q}(x_{t,m_t},l,\delta)
	=  \mathcal{O}(\sqrt{T})
	\end{align*}
	Similarly, we will have
	\begin{align*}
	B\sum\nolimits_{l=1}^L \mathcal{Q}(x_{t_l,m_{t_l}},l,\delta)
	= \mathcal{O}(B\sqrt{L}).
	\end{align*}
	Then, the m-regret of NeuralUCB with FSS is 
	\begin{align*}
	\tilde{R}_T = & \mathcal{O}(B\sqrt{L}) + L \\
	= & \mathcal{O}\left(\lfloor T^{1/3} \rfloor\sqrt{\left\lceil T/T^{1/3}\right\rceil} \right)  + \lambda \left\lceil T/T^{1/3} \right\rceil
	\end{align*}
	Using $\lfloor T^{1/3} \rfloor\sqrt{\lceil T/T^{1/3} \rceil} \leq T^{1/3} \sqrt{T/T^{1/3} + 1}\leq T^{1/3}(\sqrt{T^{2/3}}+1)$ completes the proof.
\end{proof}

\section{Proof of Theorem \ref{theo:regret_fss_ut}}\label{proof:theo:regret_fss_ut}
We let $\mathcal{L} = \{1,2,\dots\}$ be the index of solicitations. According to the design of FSS-UT, we know that the $l$-th solicitation happens in session $t=\lceil l^{1/(1-\alpha)} \rceil$. Suppose our algorithm runs $T$ sessions, then the total number of solicitation $L$ is $\lceil T^{1-\alpha} \rceil$. Using Lemma \ref{lemma:neuralUCB_perslot}, we can write the learning regret of NeuralUCB with FSS-UT as 
\begin{align*}
& \sum\nolimits_{l=1}^{L} \sum\nolimits_{t = \lceil l^{1/(1-\alpha)}\rceil}^{ \lceil (l+1)^{1/(1-\alpha)}\rceil -1} r_{t,m_t^*}-r_{t,m_t}\\
\leq & \sum\nolimits_{l=1}^L \sum\nolimits_{t = \lceil l^{1/(1-\alpha)}\rceil}^{ \lceil (l+1)^{1/(1-\alpha)}\rceil -1} \mathcal{Q}(x_{t,m_t},l,\delta) \\
\leq & \sum\nolimits_{l=1}^L \left(\lceil (l+1)^{1/(1-\alpha)}\rceil -\lceil l^{1/(1-\alpha)}\rceil\right)\cdot \mathcal{Q}(x_{t_l,m_{t_l}},l,\delta)\\
\leq & \sum\nolimits_{l=1}^L \left(\lceil (L+1)^{1/(1-\alpha)}\rceil -\lceil L^{1/(1-\alpha)}\rceil\right)\cdot \mathcal{Q}(x_{t_l,m_{t_l}},l,\delta)
\end{align*}
Note that $\mathcal{O}(\lceil (L+1)^{1/(1-\alpha)}\rceil -\lceil L^{1/(1-\alpha)}\rceil\ ) = \mathcal{O}(L^{1/(1-\alpha)-1}) = \mathcal{O}(T^\alpha)$. The order of learning regret is $\mathcal{O}(T^{(\alpha+1)/2})$, and the order of solicitation cost is $\mathcal{O}(T^{1-\alpha})$


%

\ifCLASSOPTIONcaptionsoff
  \newpage
\fi

\end{document}